\newtheorem{theorem}{Theorem}
\newtheorem{lemma}{Lemma}
\newtheorem{proposition}{Proposition}
\newtheorem{corollary}{Corollary}
\newtheorem{assumption}{Assumption}
\begin{document}
\title{  Mobility-aware Content Preference Learning in Decentralized Caching Networks}

\author{Yu~Ye,~\IEEEmembership{Student~Member,~IEEE,}
	Ming~Xiao,~\IEEEmembership{Senior~Member,~IEEE,}
	and~Mikael~Skoglund,~\IEEEmembership{Fellow,~IEEE }
	
	\thanks{ Yu Ye, Ming Xiao and Mikael Skoglund are with the School of Electrical Engineering and Computer Science, Royal Institute of Technology (KTH), Stockholm, Sweden (email: yu9@kth.se, mingx@kth.se, skoglund@kth.se).}
}
  
\maketitle

\begin{abstract}
Due to the drastic increase of mobile traffic, wireless caching is proposed to serve repeated requests for content download. To determine the caching scheme for decentralized caching networks, the content preference learning problem based on mobility prediction is studied. We first formulate preference prediction as a decentralized regularized multi-task learning (DRMTL) problem without considering the mobility of mobile terminals (MTs). The problem is solved by a hybrid Jacobian and Gauss-Seidel proximal multi-block alternating direction method (ADMM) based algorithm, which is proven to conditionally converge to the optimal solution with a rate $O(1/k)$. Then we use the tool of \textit{Markov renewal process} to predict the moving path and sojourn time for MTs, and integrate the mobility pattern with the DRMTL model by reweighting the training samples and introducing a transfer penalty in the objective. We solve the problem and prove that the developed algorithm has the same convergence property but with different conditions. Through simulation we show the convergence analysis on proposed algorithms. Our real trace driven experiments illustrate that the mobility-aware DRMTL model can provide a more accurate prediction on geography preference than DRMTL model. Besides, the hit ratio achieved by most popular proactive caching (MPC) policy with preference predicted by mobility-aware DRMTL outperforms the MPC with preference from DRMTL and random caching (RC) schemes. 	
	

\end{abstract}

\begin{IEEEkeywords}
  Proactive caching; distributed machine learning; multi-task learning  
\end{IEEEkeywords}
\IEEEpeerreviewmaketitle

\section{Introduction} 
%
%

As a promising technology for the fifth-generation (5G) wireless networks and beyond, \textit{proactive caching} can alleviate the heavy traffic burden on backhaul links and reduce service delay, through proactively storing popular contents at base stations (BSs) and mobile terminals (MTs) \cite{CMG14,BBD14,MSA16}. With the limitation of storage memory, determining where and what to cache in content centric wireless networks becomes one of the main challenges in the design of proactive caching schemes. Among the various factors affecting the wireless caching design, involving the mobility of MTs and learning content preference are two critical challenges, which have attracted more and more research interest recently. 
 
\subsection{background} 
 
Current investigation on mobility aware wireless caching mainly includes two aspects: studying the impact of MT mobility on caching schemes \cite{CMG16,GMM14,KD17,Wang2018TWC}, and optimizing the wireless caching schemes based on the mobility information of MTs \cite{PT13,Ozfa2018,TWCM17,WZS17,Zhang2019TVT2,Tan2018TVT,Hosny2016,Liu2017Acess,WCNCY17,Chen2017TWC}. A general framework on mobility-aware caching in content-centric wireless networks
is presented in \cite{CMG16}. In \cite{GMM14}, groups of mobile devices collaborating to exchange contents via BS assisted D2D communications, the deterministic and random caching strategies at MTs are analyzed, and it is shown that the latter may be more realistic in networks with MT mobility. The effect of user mobility on the coverage probability of D2D networks with distributed caching is studied in \cite{KD17}. 

Meanwhile, most of the recent results optimize caching schemes by considering mobility over different metrics. In \cite{PT13,Ozfa2018}, with the goal of minimizing the probability of using macro BSs for content delivery, mobility-aware storage allocation schemes for wireless caching in a two-tier heterogeneous network (HetNet) is studied. In \cite{TWCM17}, the proactive caching problem is investigated for cloud radio access networks (CRANs), where the caching scheme for remote radio heads and the baseband units (BBUs) is optimized to maximize the effective capacity. The mobility patterns of MTs are predicted through echo state networks (ESNs) at BBUs. In \cite{WZS17}, the inter-contact time between MTs is considered, and a mobility-aware caching strategy is developed to maximize the percentage of requested data delivered via D2D links. 
In \cite{Liu2017Acess}, a novel mobility-aware coded probabilistic caching scheme is proposed for mobile edge computing (MEC) enabled small cell networks (SCNs) to maximize the throughput. The optimization of mobility-aware caching schemes for maximizing the hit ratio is studied in \cite{WCNCY17} and \cite{Chen2017TWC}.
The contact time of MTs is evaluated in \cite{WCNCY17} through modeling MT mobility as a \textit{Markov renewal process}, and the proactive caching design at BSs and MTs is optimized to maximize the hit ratio. While a peer-to-peer connectivity model is used to obtain the mobility pattern of users and mobility-aware caching placement for maximizing the cache hit ratio is investigated in \cite{Chen2017TWC}. 

For most references above, the content preference of MTs is assumed to be known. This however is not realistic. Since the preference of contents, which varies with time and location, plays an important role to the performance of proactive caching scheme, it should be continuously learned and updated. According to \cite{Bres1999}, content preference for a group of users follows a Zipf-like distribution. Hence learning methods are proposed to obtain the content preference from request history and the context of MTs \cite{TWCM17,Zhang2019TVT2,TCB16,Tanzil2017,Muller2017,Chen2018TC,Jiang2019}. In \cite{TWCM17}, the ESNs are also used at BBUs to predict MT content request distribution from the context information of MTs. But the implementation of the proposed method is complex since the system has to generate one ESN for each MT. Reference \cite{Zhang2019TVT2} provides a user interest prediction model, which combines the social proximity and dynamic content popularity. In \cite{TCB16}, content popularity is unknown and estimation uses instantaneous demands from users within a specified time interval. To reduce learning time, a transfer learning-based approach is proposed, where extra source domain samples are provided for estimation. The authors in \cite{Tanzil2017} construct an extreme learning machine (ELM) to estimate popularity. In \cite{Song2017}, content preference is learned by multi-armed bandit method incorporated with the content caching and sharing process. The algorithm that learns context-specific content popularity online by regularly observing the context information of connected users is presented in \cite{Muller2017}. Further in \cite{Jiang2019}, an offline user preference learning approach is studied.

\subsection{motivation} 

To reduce traffic, wireless caching seeks to store the common popular contents at the edge of the network. The geographic caching scheme is studied in \cite{Bla2015ICC}. Since the MTs may have heterogeneous preference \cite{Guo2017TC}, the mobility of MTs will cause the content preference in different locations to vary with time. This is presented in Fig. \ref{fig1}, where the geography preference observed by agent 4 is determined by the movement of MTs. Thus to achieve mobility-aware content placement, wireless caching networks should learn the geography preference incorporating the mobility patterns of MTs rather than only learn the individual MT preference \cite{TCB16,Tanzil2017,Muller2017,Chen2018TC,Jiang2019}.   
 
\begin{figure}[h]    
	\begin{center}
		\includegraphics[width=80 mm]{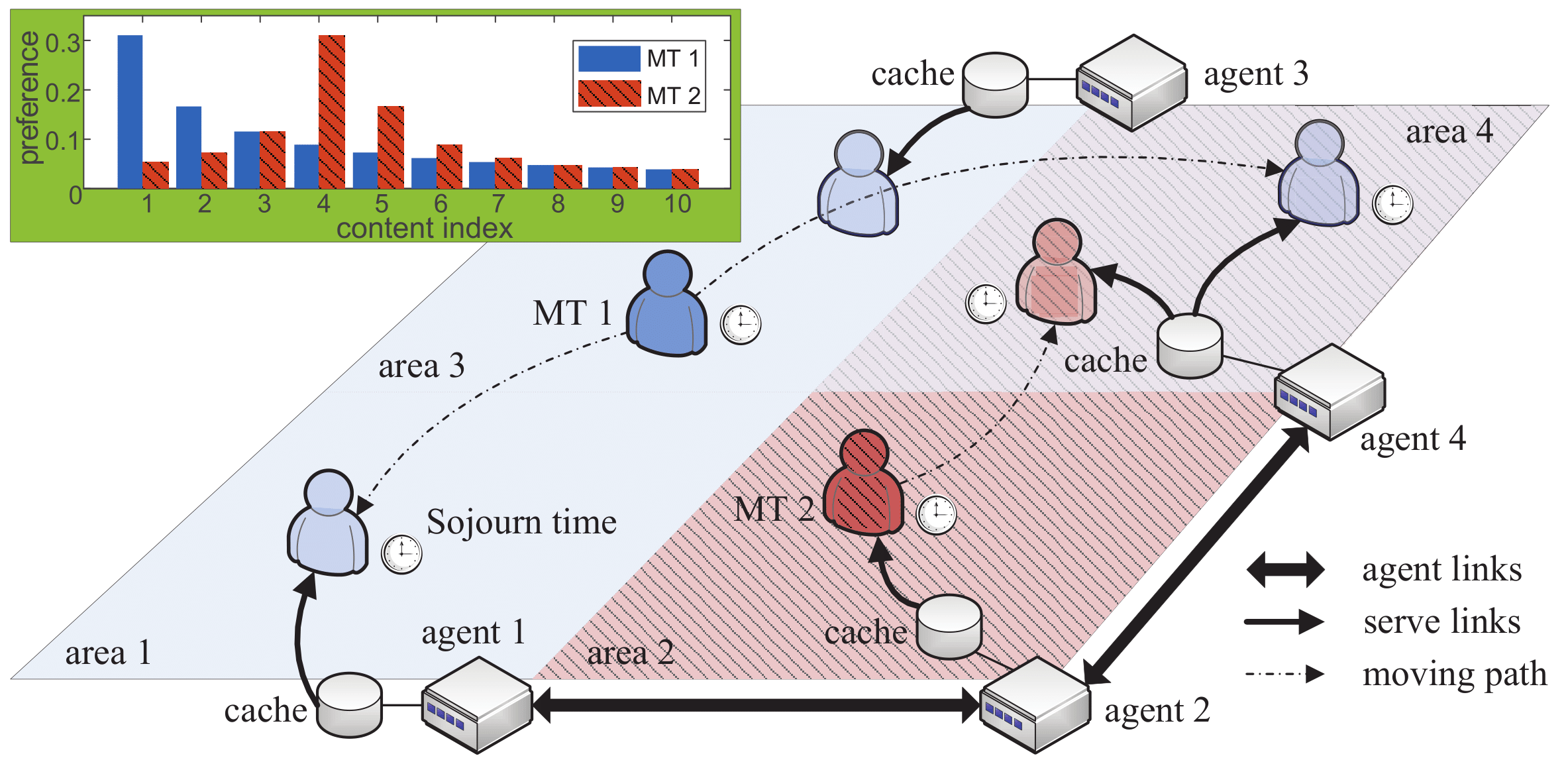}
		\caption{The system model of mobility-aware MTL proactive caching in decentralized content centric wireless networks.}
		\label{fig1}
	\end{center}
\end{figure}  
 
Since the data used to learn the user preference is first observed by local agents, in order to implement the learning in a central manner, tremendous amounts of data need to be transmitted. This can cause heavy communication load. Furthermore, due to the constraint of data privacy and security, distributed learning method is preferred on predicting the geography preference in decentralized networks. Although distributed caching is studied in \cite{Borst2010,Marini2015,Liu2016ICC}, none of these references provides machine learning based approaches for predicting the mobility-aware geography preference. 
Considering cooperative caching \cite{Yu2017ICC} and transmission \cite{Chen2017TWC1}, the proactive caching schemes in different spatial areas should be optimized jointly. Hence the agent in Fig. \ref{fig1} should be aware of what the preference is like at its neighboring areas. To achieve this requirement, we take advantage of the regularized multi-task learning (RMTL) method \cite{Evgeniou2004} and extend it to a decentralized setting. This is because the geography preference for adjacent area is correlated due to the movement of MTs.

In what follows, we shall study how to predict the content preference in geography aspects with distributed learning methods and mobility prediction. 
The main contributions of this paper are listed as follows.
 
\begin{itemize}
	\item We model geography preference prediction in distributed caching networks as a decentralized RMTL (DRMTL) problem, which is solved by proposed hybrid Jacobian and Gauss-Seidel proximal multi-block alternating direction method (ADMM).
	
	\item We model the MT moving pattern through a \textit{Markov renewal process} to predict the moving paths and sojourn time. Then we integrate the mobility into the DRMTL model by reweighting examples and introducing a transfer penalty in order to control the information exchanged across adjacent agents. The problem solution is provided.

	\item We generalize the DRMTL problem as majorized multi-block convex optimization with coupled objective functions. We show that the mobility-aware DRMTL model is consistent with the DRMTL problem, and prove that proposed solutions converge to optimum with $O(1/k)$ rate when algorithm parameters meet specific conditions.
	
 	
 	 
 	\item By simulation, we verify the convergence of proposed algorithms. With experiments on real trajectory dataset, we show that, compared with the DRMTL model, the mobility-aware DRMTL model can provide a more accurate prediction on geographic preference, with which the most popular proactive caching (MPC) scheme can achieve a better hit ratio than MPC with preference from DRMTL and random caching (RC) scheme.
 	
\end{itemize}

The rest of this paper is organized as follows. We present the system model of preference prediction and formulate it as a DRMTL problem in Section II. Then hybrid Jacobian and Gauss-Seidel proximal ADMM based algorithm (Algorithm 1) is provided, together with its convergence analysis. In section III the mobility prediction model is first presented, followed by the formulation of mobility-aware DRMTL problem, as well as the proposed Algorithm 2 and convergence proof. Numerical results are given in Section IV to evaluate the convergence and preference prediction performance of the proposed approaches. Finally, we draw conclusions in Section V.

\section{System Model and Decentralized Content Preference Learning}

The system model for content preference prediction is shown in Fig. \ref{fig1}. Each spatial area is controlled by one agent, and there is no overlap for adjacent areas. In the DRMTL setup, each task runs at one agent. Hence for simplicity, we use the agent to stand for the corresponding area and task. We define the decentralized network by an undirected graph $\mathcal{G}(\mathcal{V},\mathcal{E})$, where $\mathcal{V}$ includes all agents and $\mathcal{E}$ includes all connection among them. We denote $\mathcal{V}_i\subset\mathcal{V}$ as the neighboring agents for $i\in\mathcal{V}$, where $(i,j)\in\mathcal{E}$ if $j\in\mathcal{V}_i$. The MTs located within areas $\mathcal{V}$ are $\mathcal{M}=\{1,...,M\}$, and the interested contents for MTs are $\mathcal{F}=\{1,...,F\}$, where $F\in\mathbbm{N}_{+}$. Following \cite{Guo2017TC} we categorize MTs into $K$ groups according to their content preference, denoted by the set $\mathcal{K}=\{1,...,K\}$. The content preference for group $i\in\mathcal{K}$ is $\text{P}_i$. Without loss of generality, we assume MTs requesting contents with fixed rate $R_f$. For each request from an MT in group $i\in\mathcal{K}$, it follows distribution $\text{P}_i$. Moreover the request can only be served by the agent where the MT located. In order to determine the content placement in proactive caching for future time window $[0,t_d]$, each agent needs to predict the content preference from the observed request history $\mathcal{D}_i=\{x_{i,l},y_{i,l}|l=1,...,b_i \}$ before time $0$ at its covered area. Then each agent determines what to cache at time $0$. During $[0,t_d]$, MTs may move around within agents. Besides we assume that dataset cannot be shared among neighboring agents.
 
In what follows, we will first formulate the decentralized content preference learning model based on DRMTL.



\subsection{Problem Formulation}
Considering a decentralized network $\mathcal{G}(\mathcal{V},\mathcal{E})$, for simplicity, we denote $|\mathcal{V}|=N$.
We formulate preference prediction by the DRMTL model, where MTs are assumed to be stationary and agents cooperatively solve the following problem 
 \begin{equation}
 	\min_{ \bm{w}_i}~\sum\nolimits_{i\in\mathcal{V}}f_i\big(\bm{w}_i \big),
 \end{equation} 
where $\bm{w}_i =\bm{w}_0 + \hat{\bm{w}}_i$ and $f_i(\cdot)$ is the local loss function of task $i$.
$\bm{w}_0$ is common among agents while $\hat{\bm{w}}_i$ is the specific weight for tasks. The main reason of using DRMTL in content preference learning is that $\bm{w}_0$ can capture basis content preference across the adjacent geography areas, while $\bm{w}_i$ represents the variance of content preference of task $i$.
 Similar to \cite{Evgeniou2004}, the model parameters $\bm{w}_0$ and $\hat{\bm{w}}_i$ can be learned at the central processor, if it has access to all datasets. However, the data collected by different agents are geo-distributed. Therefore we consider the RMTL in decentralized networks. In the decentralized setting, if the parameters are learned separately by agents, a common $\bm{w}_0$ cannot be guaranteed. To solve this problem, we exploit the alternating direction method of multipliers (ADMM) to ensure all agents agree with the same basis by introducing basis weights $\check{\bm{w}}_i$ at each agent and the consensus constraints $\check{\bm{w}}_i=\check{\bm{w}}_j,\forall (i,j)\in\mathcal{E} $. The problem is formulated as

 \begin{equation}
	\begin{aligned}
	 \min_{\check{\bm{W}} ,\hat{\bm{W}}  } &~\sum\nolimits_{i\in\mathcal{V}}\left(\vphantom{\frac{1}{2}} f_i \big(\check{\bm{w}}_i,\hat{\bm{w}}_i,\mathcal{D}_i \big) + \frac{\mu_1}{2} \big\|\check{\bm{w}}_i  \big\|^2+ \frac{\mu_2}{2} \big\|\hat{\bm{w}}_i  \big\|^2 \right),\\
	 s.t.&~ \bm{A} \check{\bm{W}}=\bm{0}.
	\end{aligned}      \tag{P1}
 \end{equation} 
 where $\check{\bm{W}}=[\check{\bm{w}}_1,...,\check{\bm{w}}_N]$, $\hat{\bm{W}}=[\hat{\bm{w}}_1,...,\hat{\bm{w}}_N]$, $\bm{A}=[A_1,...,A_N]$, and $\bm{A}$ can be derived by $\mathcal{G}$. $\mathcal{D}_i $ is the dataset at agent $i$, where $x_{i,l}\in\mathbbm{R}_{+}^n$ is the feature of MT while $y_{i,l}\in\mathbbm{R}_{+}^\nu$ is the associated request history. Without loss of generality, we assume $\check{\bm{w}}_i,\hat{\bm{w}}_i\in\mathbbm{R}^n$, and denote $f_i(\check{\bm{w}}_i,\hat{\bm{w}}_i,\mathcal{D}_i)=\frac{1}{b_i}\sum\nolimits_{l=1}^{b_i}\ell(\check{\bm{w}}_i,\hat{\bm{w}}_i,x_{i,l},y_{i,l})$ as the local function, where $\ell(\cdot):\mathbbm{R}^n\to \mathbbm{R}^v$ is the loss function. The predicted preference during $[0,t_d]$ at agent $i$ can be obtained by $y_{i,0}=f_i(\check{\bm{w}}_i,\hat{\bm{w}}_i,x_{i,0})$, where $x_{i,0}$ is the input at time $0$.  
For convenience we denote $f_i(\check{\bm{w}}_i,\hat{\bm{w}}_i,\mathcal{D}_i)$ as $f_i(\check{\bm{w}}_i,\hat{\bm{w}}_i)$.

 \begin{algorithm}[h]
	\caption{} 
	\begin{algorithmic}[1]
		\STATE \textbf{initialize}: set $\{\check{\bm{w}}^0_i,\hat{\bm{w}}_i^0 |i\in\mathcal{V}\} $ randomly and $\lambda^0=\bm{0}$  	
		\FOR{$k=0,1,...$ } 
		\STATE {\bfseries agents $i=1$ to $N$}:
		\STATE update $\check{\bm{w}}^{k+1}_i$ with (\ref{eq3}) \textit{in parallel} ; 
		\STATE communicate $\check{\bm{w}}_i^{k+1}$ with neighboring agents and update $\lambda^{k+1}$ with (\ref{eq4}).	 
		\STATE {\bfseries agents $i=1$ to $N$}:
		\STATE update $\hat{\bm{w}}^{k+1}_i$ with (\ref{eq5}) \textit{in parallel}; 
		\ENDFOR 
	\end{algorithmic} 
\end{algorithm} 
 
Problem (P1) can be solved with the proximal ADMM method \cite{Deng2017}. The augmented Lagrange function is given by
 \begin{equation}
 	\begin{aligned}
 	\vphantom{\frac{1}{2}}\mathcal{L}_{\rho}^1 \big(\check{\bm{W}},\hat{\bm{W}},\lambda \big)=&\sum\nolimits_{i }\left(\vphantom{\frac{1}{2}} f_i \big(\check{\bm{w}}_i,\hat{\bm{w}}_i \big) + \frac{\mu_1}{2} \big\|\check{\bm{w}}_i  \big\|^2+\right.\\&~~~~~~~\left.\vphantom{\frac{1}{2}} \frac{\mu_2}{2} \big\|\hat{\bm{w}}_i  \big\|^2+ \lambda^T\bm{A}\check{\bm{W}} + \frac{\rho}{2} \big\| \bm{A}\check{\bm{W}}  \big\|^2\right),
 	\end{aligned}
 \end{equation}
where $\lambda$ is a Lagrange multiplier and $\rho~(>0)$ is a constant parameter.  
The constraint matrix satisfies $A_i^TA_i = d_iI$, where $I$ is an identity matrix, $d_i=|\mathcal{V}_i|$ is the degree of agent $i$, and $A_i^TA_j = \bm{0}$ if $(i,j)\notin \mathcal{E}$. By defining the set $\check{\bm{W}}_{-i}^k=[\check{\bm{w}}_1,...,\check{\bm{w}}_{i-1},\check{\bm{w}}_{i+1},...,\check{\bm{w}}_N]$, the update of hybrid Jacobi and Gauss-Seidel type ADMM follows 
\begin{align}
\check{\bm{w}}_i^{k+1} :=&\arg\min \mathcal{L}_{\rho}  \big(\check{\bm{w}}_i,\hat{\bm{w}}_i^k,\check{\bm{W}}_{-i}^k,\lambda^k  \big)+\frac{1}{2} \big\| \check{\bm{w}}_i-\check{\bm{w}}^k_i  \big\|^2_{P_i},\label{eq3}\\
\vphantom{\frac{1}{2}} \lambda^{k+1} :=& \lambda^{k} + \gamma \rho \bm{A}\check{\bm{W}}^{k+1},\label{eq4}\\
\hat{\bm{w}}_i^{k+1} :=&\arg\min \mathcal{L}_{\rho} \big(\hat{\bm{w}}_i,\check{\bm{w}}_i^{k+1}  \big)+\frac{1}{2} \big\| \hat{\bm{w}}_i-\hat{\bm{w}}^k_i  \big\|^2_{Q_i},\label{eq5}
\end{align}
where $k$ is the iteration number.
Then we conclude Algorithm 1 to solve (P1). In Algorithm 1, the values of $\check{\bm{w}}_i$ and $\hat{\bm{w}}_i$ are randomly initialized. In iteration $k$, $\check{\bm{w}}_i^{k+1}$ is firstly updated at each agent in parallel and then communicated to agent $j$ where $(i,j)\in\mathcal{E}$. The Lagrange $\lambda$ regarding to all connections is then updated with $\check{\bm{w}}_i^{k+1}$. At the end of iteration $k$, the local weights $\hat{\bm{w}}_i$ are optimized separately across agents. It is worth noting that Algorithm 1 is synchronous ADMM since the clock $k$ is kept unique among agents.  

%

\subsection{Convergence analysis}
 In what follows, we will analyze the convergence properties of the proposed Algorithm 1. It is worth noting that (P1) is a majorized multi-block ADMM with coupled objective functions, which is preliminarily studied in \cite{Cui2016CPM,Gao2017,Xu2018SIAM}. Our proof is different from that of 
\cite{Xu2018SIAM}, where a hybrid Jacobian and Gauss-Seidel proximal block coordinate update (BCU) method is presented to solve a linearly constrained multi-block structured problem with a quadratic term in objective. 
The similar type algorithm is also provided in \cite{yu2019} to solve a multi-convex problem. We start with the following two assumptions.
\begin{assumption}
 The undirected graph $\mathcal{G}$ is connected.
\end{assumption}
The Assumption 1 ensures that the consensus for $\check{\bm{w}}_i$ can be guaranteed in Algorithm 1. Defining $\bm{z}_i = [\check{\bm{w}}_i, \hat{\bm{w}}_i]$, we present the assumption on the loss function $f_i$. 
\begin{assumption}
	The local loss function $f_i$ is differentiable and jointly convex over $\check{\bm{w}}_i$ and $\hat{\bm{w}}_i$. The gradient of $f_i$ is Lipschitz continuous  
	\begin{equation}
		 \big\|\nabla f_i \big(\bm{z}_i^1\big)- \nabla f_i \big(\bm{z}_i^2 \big)  \big\| \leq C_i \big\| \bm{z}_i^1 - \bm{z}_i^2  \big\| ,~\forall \bm{z}_i^1,\bm{z}_i^2\in \mathbbm{R}^{n}\times\mathbbm{R}^{n},\label{eq7}
	\end{equation}
	where $C_i$ is the Lipschitz constant.
\end{assumption}

Denoting $g(\check{\bm{w}}_i)=\frac{\mu_1}{2} \|\check{\bm{w}}_i \|^2$ and $h(\hat{\bm{w}}_i)=\frac{\mu_1}{2} \|\hat{\bm{w}}_i \|^2$, further with Assumption 2, we obtain the following useful inequality for loss function $f_i$. 
\begin{lemma}
	For any $ \bm{z}^1_i,\bm{z}^2_i,\bm{z}^3_i\in \mathbbm{R}^{n}\times\mathbbm{R}^{n}$, 
	\begin{equation}
	f_i \big(\bm{z}_i^2 \big)\leq f_i \big(\bm{z}_i^1 \big) +  \big(\bm{z}_i^2-\bm{z}_i^1 \big)^T \nabla f_i \big(\bm{z}_i^3 \big) + \frac{C_i}{2} \big\|\bm{z}_i^2-\bm{z}_i^3  \big\|^2. \label{eq8} 
	\end{equation}
	With the convexity of $g\big(\check{\bm{w}}_i\big)$ and the strong convexity of $h\big(\hat{\bm{w}}_i\big)$ with constant $ m(>0)$, we have
	\begin{align} 
	&\vphantom{\frac{1}{2}} \big(\check{\bm{w}}_i^1-\check{\bm{w}}_i^2 \big)^T g'  \big(\check{\bm{w}}_i^2 \big)\leq g'  \big(\check{\bm{w}}_i^1 \big) - g'  \big(\check{\bm{w}}_i^2 \big),\label{eq9}\\
    &\big(\hat{\bm{w}}_i^1-\hat{\bm{w}}_i^2 \big)^T h'  \big(\hat{\bm{w}}_i^2 \big)+\frac{m}{2} \big\| \hat{\bm{w}}_i^1-\hat{\bm{w}}_i^2  \big\|^2\leq h'  \big(\hat{\bm{w}}_i^1 \big) - h'  \big(\hat{\bm{w}}_i^2 \big).\label{eq10}
	\end{align} 
\end{lemma}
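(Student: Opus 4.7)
The plan is to prove the three inequalities separately, treating (\ref{eq8}) as the main content of the lemma and deriving (\ref{eq9}) and (\ref{eq10}) directly from the (strong) convexity of $g$ and $h$.

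For (\ref{eq8}), the strategy is a standard three-point descent trick that fuses the Lipschitz gradient descent lemma with first-order convexity. Concretely, I would first apply the descent lemma, which is a well-known consequence of Assumption 2 obtained by writing $f_i(\bm{z}_i^2) - f_i(\bm{z}_i^3) = \int_0^1 (\bm{z}_i^2 - \bm{z}_i^3)^T \nabla f_i(\bm{z}_i^3 + t(\bm{z}_i^2 - \bm{z}_i^3))\,dt$, adding and subtracting $\nabla f_i(\bm{z}_i^3)$ inside the integrand, and bounding the remainder via Cauchy--Schwarz and (\ref{eq7}):
\[ f_i(\bm{z}_i^2) \le f_i(\bm{z}_i^3) + (\bm{z}_i^2 - \bm{z}_i^3)^T \nabla f_i(\bm{z}_i^3) + \frac{C_i}{2}\|\bm{z}_i^2 - \bm{z}_i^3\|^2. \]
I would then invoke the gradient inequality for convex $f_i$ at the pair $(\bm{z}_i^1, \bm{z}_i^3)$:
\[ f_i(\bm{z}_i^3) \le f_i(\bm{z}_i^1) + (\bm{z}_i^3 - \bm{z}_i^1)^T \nabla f_i(\bm{z}_i^3). \]
Adding the two inequalities collapses the linear terms involving $\nabla f_i(\bm{z}_i^3)$ into $(\bm{z}_i^2 - \bm{z}_i^1)^T \nabla f_i(\bm{z}_i^3)$, delivering (\ref{eq8}) exactly.

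Inequalities (\ref{eq9}) and (\ref{eq10}) are then just restatements of the first-order (strong) convexity condition applied to $g(\check{\bm{w}}_i) = \tfrac{\mu_1}{2}\|\check{\bm{w}}_i\|^2$ and $h(\hat{\bm{w}}_i) = \tfrac{\mu_1}{2}\|\hat{\bm{w}}_i\|^2$. I read the right-hand sides as $g(\check{\bm{w}}_i^1) - g(\check{\bm{w}}_i^2)$ and $h(\hat{\bm{w}}_i^1) - h(\hat{\bm{w}}_i^2)$ (the primes there appear to be a typographical slip), in which case the two bounds are precisely the textbook definitions of convexity and $m$-strong convexity, respectively, and no further argument is required.

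The only real obstacle is (\ref{eq8}); neither the descent lemma nor convexity alone would yield an estimate with the function value anchored at $\bm{z}_i^1$, the gradient evaluated at $\bm{z}_i^3$, and the quadratic remainder measured on $\bm{z}_i^2 - \bm{z}_i^3$. Introducing $\bm{z}_i^3$ as an intermediate anchor and chaining a descent estimate with a convexity estimate through it is exactly the device that matches the hybrid Jacobi/Gauss--Seidel ADMM updates (\ref{eq3})--(\ref{eq5}) used later, where gradients will be evaluated at freshly computed iterates while descent must be compared against previous ones.
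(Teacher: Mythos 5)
Your proposal is correct and follows essentially the same route as the paper, which simply cites ``Fact~2'' of Drori--Teboulle for (\ref{eq8}) and invokes (strong) convexity for (\ref{eq9})--(\ref{eq10}); your chaining of the descent lemma at the anchor $\bm{z}_i^3$ with the first-order convexity inequality is exactly the content of that cited fact, so you have merely made explicit what the paper leaves to the reference. Your reading of the primes on the right-hand sides of (\ref{eq9})--(\ref{eq10}) as function values rather than derivatives is also the only interpretation under which those inequalities typecheck, and it matches how they are used in Appendix~A.
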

\begin{proof}
	Following Fact 2 in \cite{Drori2015}, (\ref{eq8}) can be shown directly. (\ref{eq9}) holds because of the convexity of $g(\cdot)$. Since $m=2$ can make (\ref{eq10}) satisfied, $h(\cdot)$ is strongly convex. 
\end{proof}
With Assumption 1 and Lemma 1, we then present the global convergence of Algorithm 1. To simplify the
notation, we denote
\begin{align}
\bm{G}_1&:=\text{blkdiag}\big(\rho A_1^TA_1 + P_1,..., \rho A_N^TA_N + P_N\big),\\
\vphantom{\frac{1}{2}} \bm{G}_2&:=\text{blkdiag}\big(Q_1,...,Q_N \big),\\
\vphantom{\frac{1}{2}} \bm{G}_3&:=\text{blkdiag}\left( \frac{C_1}{m}\big(C_1+m\big),...,\frac{C_N}{m}\big(C_N+m\big) \right), \\
\vphantom{\frac{1}{2}} \bm{G}  &:=\text{blkdiag}\left(\bm{G}_1,\bm{G}_2,\frac{1}{\gamma\rho}I \right), \\
\bm{M} &:= \begin{bmatrix}
\bm{G}_1 & \bm{0} &  \frac{1}{\gamma}  \bm{A} ^T\\
\bm{0}&\bm{G}_2-\bm{G}_{3}  &\bm{0} \\
\frac{1}{\gamma} \bm{A} &  \bm{0} & \frac{2-\gamma}{\gamma^2 \rho}I
\end{bmatrix}, 
\end{align}
where $\text{blkdiag}(\cdot)$ stands for the block-diagonal matrix.
Denote $  \{ \bm{u}^{k+1} =  [\check{\bm{W}}^{(k+1)T},\hat{\bm{W}}^{(k+1)T},\lambda^{(k+1)T}  ]^T,k\geq 1 \}$ as the sequence generated by Algorithm 1 after the $k$-th iteration. Our analysis focuses on bounding the error $  \|\bm{u}^k-\bm{u}^*  \|^2_G$ and showing that it decreases with iterations, where $\bm{u}^*$ is the optimal solution for (P1).

\begin{lemma}
	For $k\geq 1$, the sequence $\bm{u}^k$ satisfies
	\begin{equation}
\big\|\bm{u}^k-\bm{u}^*  \big\|^2_{\bm{G}} -  \big\|\bm{u}^{k+1}-\bm{u}^* \big\|^2_{\bm{G}} \geq  \big\|\bm{u}^k-\bm{u}^{k+1} \big\|^2_{\bm{M}},
	\end{equation}
	where
	\begin{equation}
	\begin{aligned}
	&\vphantom{\frac{1}{2}}\big\| \bm{u}^k-\bm{u}^{k+1} \big\|^2_{\bm{M}}= \big\|\check{\bm{W}}^{k}-\check{\bm{W}}^{k+1} \big\|^2_{\bm{G}_1} + \big\|\hat{\bm{W}}^{k}-\hat{\bm{W}}^{k+1} \big\|^2_{\bm{G}_2-\bm{G}_3}+\\&  \frac{2-\gamma}{\gamma^2\rho} \big\| \lambda^k-\lambda^{k+1} \big\|^2 + \frac{2}{\gamma}  \big( \lambda^k-\lambda^{k+1}\big)^T \bm{A}  \big(\check{\bm{W}}^k-\check{\bm{W}}^{k+1} \big)    .
	\end{aligned}
	\end{equation}
\end{lemma}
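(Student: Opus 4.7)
The plan is to derive the inequality by combining the first-order optimality conditions of the two primal updates (\ref{eq3}) and (\ref{eq5}) with the dual update (\ref{eq4}), and then invoking Lemma~1 to trade gradient-evaluated cross terms for function-value gaps. First I would write the optimality conditions of the subproblems: from (\ref{eq3}),
\begin{equation*}
\nabla_{\check{\bm{w}}_i} f_i\bigl(\check{\bm{w}}_i^{k+1},\hat{\bm{w}}_i^k\bigr) + g'\bigl(\check{\bm{w}}_i^{k+1}\bigr) + A_i^T\lambda^k + \rho A_i^T \bm{A}\check{\bm{W}}^{k+1} + P_i\bigl(\check{\bm{w}}_i^{k+1}-\check{\bm{w}}_i^k\bigr)=\bm{0},
\end{equation*}
and from (\ref{eq5}) an analogous condition involving $Q_i$, $h'$ and $\nabla_{\hat{\bm{w}}_i} f_i$. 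I would simultaneously write the KKT conditions at the optimum $\bm{u}^*$. Subtracting these and taking inner products with $\check{\bm{w}}_i^{k+1}-\check{\bm{w}}_i^*$ and $\hat{\bm{w}}_i^{k+1}-\hat{\bm{w}}_i^*$ respectively is the natural way to start turning optimality into a descent relation.

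Next I would use the three inequalities of Lemma~1. Inequality (\ref{eq8}) applied with $\bm{z}_i^3$ chosen as the mixed iterate $(\check{\bm{w}}_i^{k+1},\hat{\bm{w}}_i^k)$ lets me control the nonseparable coupling between the two blocks that is created by the Gauss--Seidel nature of the update; this is where the $\bm{G}_3$ term (with factor $\tfrac{C_i}{m}(C_i+m)$) will appear, via Young's inequality applied to cross products of the form $(\hat{\bm{w}}_i^{k+1}-\hat{\bm{w}}_i^k)^T\nabla f_i$. Inequalities (\ref{eq9}) and (\ref{eq10}) are used to remove $g'$ and $h'$ in favor of quadratic gaps, with the strong convexity constant $m$ of $h$ giving the negative quadratic that offsets the Lipschitz cross term.

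I would then handle the dual variable by rewriting (\ref{eq4}) as $\bm{A}\check{\bm{W}}^{k+1}=\tfrac{1}{\gamma\rho}(\lambda^{k+1}-\lambda^k)$, and using $\bm{A}\check{\bm{W}}^*=\bm{0}$ to convert terms $\langle \lambda^{k+1}-\lambda^*,\bm{A}(\check{\bm{W}}^{k+1}-\check{\bm{W}}^*)\rangle$ into $\tfrac{1}{\gamma\rho}\langle \lambda^{k+1}-\lambda^*,\lambda^{k+1}-\lambda^k\rangle$. Using the identity $2\langle a-b,a-c\rangle=\|a-c\|^2-\|b-c\|^2+\|a-b\|^2$ in the weighted norm $\bm{G}$, each of the three blocks (corresponding to $\check{\bm{W}}$, $\hat{\bm{W}}$, $\lambda$) produces a telescoping term $\|\bm{u}^k-\bm{u}^*\|^2_{\bm{G}_\cdot}-\|\bm{u}^{k+1}-\bm{u}^*\|^2_{\bm{G}_\cdot}$ plus a consecutive-iterate term. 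The $\tfrac{2-\gamma}{\gamma^2\rho}\|\lambda^k-\lambda^{k+1}\|^2$ piece arises from rewriting $\tfrac{1}{\gamma\rho}\|\lambda^{k+1}-\lambda^k\|^2$ together with the remainder $\tfrac{1-\gamma}{\gamma^2\rho}\|\lambda^{k+1}-\lambda^k\|^2$ that survives the dual identity, and the cross term $\tfrac{2}{\gamma}(\lambda^k-\lambda^{k+1})^T\bm{A}(\check{\bm{W}}^k-\check{\bm{W}}^{k+1})$ comes from moving $\bm{A}\check{\bm{W}}^{k+1}$ through the primal optimality conditions.

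The main obstacle I expect is bookkeeping the cross-block term generated by the Gauss--Seidel structure, namely $(\hat{\bm{w}}_i^{k+1}-\hat{\bm{w}}_i^k)^T[\nabla_{\hat{\bm{w}}_i} f_i(\check{\bm{w}}_i^{k+1},\hat{\bm{w}}_i^{k+1})-\nabla_{\hat{\bm{w}}_i} f_i(\check{\bm{w}}_i^{k+1},\hat{\bm{w}}_i^k)]$-type contributions, and showing that Young's inequality with the correct weights yields exactly $\bm{G}_2-\bm{G}_3$ (rather than a looser bound). Getting the coefficient $\tfrac{C_i}{m}(C_i+m)$ precisely right requires balancing the Lipschitz constant $C_i$ of $\nabla f_i$ from Assumption~2 against the strong convexity constant $m$ of $h$, and choosing the Young's-inequality split so that the resulting quadratic in $\|\hat{\bm{w}}_i^{k+1}-\hat{\bm{w}}_i^k\|^2$ absorbs into the $Q_i$ proximal term while leaving nothing attached to $\|\check{\bm{w}}_i^{k+1}-\check{\bm{w}}_i^k\|^2$ beyond what $\bm{G}_1$ already carries. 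Once that balancing is carried out and all the pieces are summed over $i\in\mathcal{V}$, collecting terms according to the block structure of $\bm{M}$ yields the claimed inequality.
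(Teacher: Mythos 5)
Your proposal is correct and follows essentially the same route as the paper's Appendix A: the three-point inequality of Lemma~1 evaluated at the mixed iterate $(\check{\bm{w}}_i^{k+1},\hat{\bm{w}}_i^k)$, the convexity/strong-convexity bounds for $g$ and $h$, the subproblem optimality and KKT conditions, the dual-update identity $\bm{A}(\check{\bm{W}}^{k+1}-\check{\bm{W}}^*)=\tfrac{1}{\gamma\rho}(\lambda^k-\lambda^{k+1})$, and the weighted-norm telescoping identity. The only cosmetic difference is that the paper's Young's-inequality step balances $C_i\|\hat{\bm{w}}_i^{k+1}-\hat{\bm{w}}_i^*\|\,\|\hat{\bm{w}}_i^{k}-\hat{\bm{w}}_i^{k+1}\|$ (with the $\tfrac{m}{2}\|\hat{\bm{w}}_i^{k+1}-\hat{\bm{w}}_i^*\|^2$ piece absorbed by the strong convexity of $h$), pairing the distance to the optimum rather than the consecutive-iterate difference with the gradient discrepancy, which yields the coefficient $\tfrac{C_i}{2m}(C_i+m)$ exactly as you anticipate.
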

\begin{proof}
	See the Appendix A.
\end{proof}

If matrix $\bm{M}$ is positive definite, then there exists some $\eta>0$ such that
\begin{equation}
 \big\| \bm{u}^k-\bm{u}^*  \big\|_{\bm{G}}^2-  \big\|\bm{u}^{k+1}-\bm{u}^*  \big\|^2_{\bm{G}}\geq \big\|\bm{u}^k-\bm{u}^{k+1}   \big\|_{\bm{M}}^2  \geq \eta   \big\|\bm{u}^{k}-\bm{u}^{k+1}   \big\|^2. \label{eq18}
\end{equation}
 (\ref{eq18}) shows that with increasing iteration $k$, error $ \| \bm{u}^k-\bm{u}^*   \|_{\bm{G}}^2$ is monotonically non-increasing and thus converging, and $ \|\bm{u}^k-\bm{u}^{k+1}   \| \to 0$. Then from the standard analysis for contraction methods \cite{he1997}, $ \| \bm{u}^k-\bm{u}^*   \|_{\bm{G}}^2\to 0$ can be obtained immediately. In the following, we provide the conditions that guarantee the global convergence of Algorithm 1.
 For convenience we adopt \textit{Standard Proximal} as $P_i=\tau_iI $ and $Q_i=\zeta_iI$, where $\tau_i, \zeta_i\in \mathbbm{R}_{+}$. 
\begin{theorem}(Convergence of Algorithm 1)
	If there exist $0< \epsilon_i <1$ and  $0<\gamma<2$ such that $\rho$, $\tau_i$ and $\zeta_i$ satisfy the following conditions:  
\begin{equation}
\tau_i>\rho\left(\frac{1}{\epsilon_i}-1 \right) d_{i },~\zeta_i>\frac{C_i}{m}(C_i+m),~\sum\nolimits_{i}\epsilon_i<2-\gamma,\label{eq19}
\end{equation}    
	then the sequence $\bm{u}^k$ generated by Algorithm 1 converges to the global optimal solution $\bm{u}^*$ of Problem (P1).
\end{theorem}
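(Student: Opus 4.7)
The plan is to reduce everything to verifying positive definiteness of $\bm{M}$ under the conditions (\ref{eq19}). Lemma 2, together with the remark following it, has already done the dynamical work: once $\bm{M}\succ\bm{0}$, picking $\eta=\lambda_{\min}(\bm{M})>0$ in (\ref{eq18}) makes $\|\bm{u}^k-\bm{u}^*\|^2_{\bm{G}}$ non-increasing and forces $\|\bm{u}^k-\bm{u}^{k+1}\|\to 0$; the standard contraction-method argument of \cite{he1997} then yields $\bm{u}^k\to\bm{u}^*$. So the theorem follows as soon as each of the three conditions in (\ref{eq19}) is tied to one structural requirement on $\bm{M}$.

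First I would handle the middle block. With the \textit{Standard Proximal} choice $Q_i=\zeta_i I$, the middle diagonal block of $\bm{M}$ is $\bm{G}_2-\bm{G}_3=\text{blkdiag}\big(\zeta_i I-\tfrac{C_i}{m}(C_i+m)I\big)$, which is positive definite exactly when $\zeta_i>\tfrac{C_i}{m}(C_i+m)$ for every $i$: this is the second condition in (\ref{eq19}) and decouples cleanly from the rest.

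Next, for the outer block involving $\bm{G}_1$, $\bm{A}$ and $\tfrac{2-\gamma}{\gamma^2\rho}I$, I would take the Schur complement with respect to the $\lambda$-block (which is positive definite since $0<\gamma<2$) to reduce the requirement to $\bm{G}_1\succ \tfrac{\rho}{2-\gamma}\bm{A}^T\bm{A}$. Using a weighted Cauchy--Schwarz inequality with free weights $\epsilon_i\in(0,1)$,
$$\|\bm{A}\check{\bm{W}}\|^2=\Big\|\sum\nolimits_{i} A_i\check{\bm{w}}_i\Big\|^2\leq \Big(\sum\nolimits_{j}\epsilon_j\Big)\sum\nolimits_{i}\frac{d_i}{\epsilon_i}\|\check{\bm{w}}_i\|^2,$$
where the identity $A_i^TA_i=d_iI$ is used. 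Matching this coefficient-wise against $\bm{G}_1=\text{blkdiag}\big((\rho d_i+\tau_i)I\big)$, positive definiteness reduces to $(\rho d_i+\tau_i)\epsilon_i>\tfrac{\rho d_i}{2-\gamma}\sum_j\epsilon_j$ for every $i$, which holds as soon as $\tau_i>\rho(1/\epsilon_i-1)d_i$ and $\sum_j\epsilon_j<2-\gamma$, i.e.\ the first and third conditions of (\ref{eq19}).

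The main obstacle is precisely the weighted Cauchy--Schwarz bound on $\|\bm{A}\check{\bm{W}}\|^2$: a naive Jacobi-type estimate would force $\tau_i$ to scale with a global graph quantity (for instance the spectral norm of $\bm{A}^T\bm{A}$ or the network size $N$), destroying the agent-local character of the conditions. The introduction of the free weights $\epsilon_i$ localizes the threshold so that each agent only needs to know its own degree $d_i$ together with the shared budget $\sum_j\epsilon_j<2-\gamma$. Once this bound is in place, combining the Schur-complement decomposition with the middle-block estimate gives $\bm{M}\succ\bm{0}$, and the convergence of $\bm{u}^k$ to the unique optimum of (P1) follows from Lemma 2 as described above.
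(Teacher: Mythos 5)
Your proposal is correct and follows essentially the same route as the paper: both reduce the theorem to positive definiteness of $\bm{M}$ (invoking Lemma 2 and the contraction argument of \cite{he1997}), decouple the $\hat{\bm{W}}$-block via $\zeta_i>\frac{C_i}{m}(C_i+m)$, and localize the $\lambda$--$\check{\bm{W}}$ coupling with free per-agent weights $\epsilon_i$ under the budget $\sum_i\epsilon_i<2-\gamma$. The only cosmetic difference is that the paper lower-bounds the cross term $\frac{2}{\gamma}\lambda^T\bm{A}\check{\bm{W}}$ directly by Young's inequality inside the quadratic form, whereas you take a Schur complement and then apply weighted Cauchy--Schwarz to $\|\bm{A}\check{\bm{W}}\|^2$; these yield identical conditions.
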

\begin{proof}
 See the Appendix B.
\end{proof}
 
\begin{proposition}
 By letting $\epsilon_i<\frac{2-\gamma}{N}$, condition (\ref{eq19}) reduces to 
\begin{equation}
	\tau_i > \rho \left(\frac{N}{2-\gamma}-1 \right) d_{i },~   
	\zeta_i > \frac{C_i}{m}(C_i+m).\label{eq20}
\end{equation}
   
\end{proposition}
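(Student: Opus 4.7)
The plan is to verify that the symmetric choice $\epsilon_i = (2-\gamma)/N$ (approached from below) substitutes the three constraints of (\ref{eq19}) into the two constraints of (\ref{eq20}), so Proposition 1 is essentially a parameter specialization of Theorem~1 rather than a new result requiring a fresh convergence argument.

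First I would observe that the second inequality in (\ref{eq19}), $\zeta_i > (C_i/m)(C_i+m)$, does not involve $\epsilon_i$ or $\gamma$ at all, so it carries over verbatim as the second inequality of (\ref{eq20}); this disposes of the $\hat{\bm{w}}_i$-side of the conclusion. Next I would absorb the sum constraint $\sum_i \epsilon_i < 2-\gamma$: if every $\epsilon_i$ is chosen from the open interval $(0,(2-\gamma)/N)$, then $\sum_i \epsilon_i < N \cdot (2-\gamma)/N = 2-\gamma$ holds automatically, so the uniform per-agent bound $\epsilon_i < (2-\gamma)/N$ is a sufficient replacement for the joint summability condition.

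For the first inequality, I would exploit the fact that $\rho(1/\epsilon_i - 1)d_i$ is strictly decreasing in $\epsilon_i$. Hence, over $\epsilon_i \in (0,(2-\gamma)/N)$ the weakest (infimal) threshold on $\tau_i$ is obtained as $\epsilon_i \uparrow (2-\gamma)/N$, giving the limit $\rho(N/(2-\gamma) - 1)d_i$, which is exactly the right-hand side of the first inequality in (\ref{eq20}). By continuity of $1/\epsilon_i$, any $\tau_i$ satisfying $\tau_i > \rho(N/(2-\gamma) - 1)d_i$ admits some $\epsilon_i$ slightly below $(2-\gamma)/N$ for which the original inequality $\tau_i > \rho(1/\epsilon_i - 1)d_i$ still holds strictly, and conversely any admissible $\epsilon_i < (2-\gamma)/N$ forces $\tau_i$ to exceed the reduced threshold. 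Combined with the previous two observations, this shows that (\ref{eq20}) is precisely the specialization of (\ref{eq19}) under the stated restriction on $\epsilon_i$, so Theorem~1 transfers immediately to give global convergence under (\ref{eq20}).

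There is no real obstacle in this argument; the only delicate point is the strict-inequality boundary at $\epsilon_i = (2-\gamma)/N$, which is handled by taking $\epsilon_i$ arbitrarily close to, but strictly less than, the supremum and invoking continuity of the threshold expression in $\epsilon_i$.
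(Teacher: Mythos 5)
Your proof is correct and takes the same route as the paper, which simply notes that Proposition 1 follows from the proof of Theorem 1 by specializing the choice of $\epsilon_i$; you have merely spelled out the details (the $\zeta_i$ condition is untouched, the sum condition is automatic under the uniform bound, and the $\tau_i$ threshold is the infimum over admissible $\epsilon_i$ approached as $\epsilon_i \uparrow (2-\gamma)/N$). Your handling of the strict-inequality boundary via continuity is a correct and slightly more careful version of what the paper leaves implicit.
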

\begin{proof}
	The results can be straightly obtained from the proof of Theorem 1.
\end{proof}
Next we shall investigate the convergence rate of Algorithm 1. Here, we define 
\begin{equation}
\bm{G}_1^{\dagger}:= \bm{G}_1-\rho\bm{A}^T\bm{A}  , 
\end{equation}
 and denote $\bm{F}(\bm{Z})=\sum\nolimits_{i}F_i(\bm{z}_i)=\sum_{i}(f_i(\check{\bm{w}}_i,\hat{\bm{w}}_i) + g(\check{\bm{w}}_i) +h(\hat{\bm{w}}_i) )$ with $\bm{Z}=[\check{\bm{W}},\hat{\bm{W}}]$.
 \begin{corollary}(Convergence rate of Algorithm 1)
 If $\bm{G}_1^{\dagger}\succ 0$, $\bm{G}_2 \succ \bm{G}_3$, $ 0<\gamma<2 $ and $\lambda^0=\bm{0}$, then we have
 \begin{equation}
 \begin{aligned}
  \bm{F} \big(\overline{\bm{Z}}^k \big)- \bm{F}  \big(\bm{Z} ^* \big)  \leq  \frac{1}{2k}\bigg(  \big\|  \check{\bm{W}}^0-\check{\bm{W}}^* \big\|^2_{\bm{G}_1^{\dagger}} +  \big\|  \hat{\bm{W}}^0-\hat{\bm{W}}^* \big\|^2_{\bm{G}_2 } \bigg),\label{eq22}
 \end{aligned}
 \end{equation}
 where $\overline{\bm{Z}} ^k=\frac{1}{k}\sum_{i=1}^{k}\bm{Z} ^i$.
\end{corollary}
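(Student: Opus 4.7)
The plan is a standard ergodic-averaging ADMM argument: derive a per-iteration variational inequality that upper-bounds $\bm{F}(\bm{Z}^{k+1})-\bm{F}(\bm{Z}^{*})$ by a sum of telescoping weighted squared distances to the optimum, sum from $k=0$ to $K-1$, and close with Jensen's inequality, which is available because $\bm{F}$ is convex by Assumption~2 together with the convexity of $g$ and $h$.

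For the one-step bound I would start from the first-order optimality conditions of the proximal subproblems \eqref{eq3} and \eqref{eq5} at iterate $k+1$, contract them respectively with $\check{\bm{w}}_i^{k+1}-\check{\bm{w}}_i^{*}$ and $\hat{\bm{w}}_i^{k+1}-\hat{\bm{w}}_i^{*}$, and combine them with the convexity and Lipschitz-gradient inequalities of Lemma~1. The $\check{\bm{w}}_i$-stationarity contributes a multiplier cross term $A_i^{T}(\lambda^k+\rho\bm{A}\check{\bm{W}}^{k+1})$; via the dual update \eqref{eq4} this equals $\tfrac{1}{\gamma}A_i^{T}\lambda^{k+1}$, and since $\bm{A}\check{\bm{W}}^{*}=\bm{0}$ at feasibility, the identity
\begin{equation*}
\|\check{\bm{W}}^{k+1}-\check{\bm{W}}^{*}\|^2_{\bm{G}_1}=\|\check{\bm{W}}^{k+1}-\check{\bm{W}}^{*}\|^2_{\bm{G}_1^{\dagger}}+\rho\|\bm{A}\check{\bm{W}}^{k+1}\|^2
\end{equation*}
lets me split the $\bm{G}_1$ contribution into a $\bm{G}_1^{\dagger}$-weighted primal piece and a $\rho\|\bm{A}\check{\bm{W}}^{k+1}\|^2$ piece that feeds the dual telescope. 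Applying the polarization identity to each remaining inner product produces differences of the form $\tfrac12\|\cdot^{k}-\cdot^{*}\|^2_Q-\tfrac12\|\cdot^{k+1}-\cdot^{*}\|^2_Q$, together with non-positive squared-increment remainders that I discard using $\bm{G}_1^{\dagger}\succ 0$ and $\bm{G}_2\succ\bm{G}_3$. After these manipulations the one-step inequality telescopes cleanly in the $\bm{G}_1^{\dagger}$-, $\bm{G}_2$-, and $\tfrac{1}{\gamma\rho}I$-norms on $\check{\bm{W}}$, $\hat{\bm{W}}$, and $\lambda$, respectively.

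Summing from $k=0$ to $K-1$, the $\bm{G}_1^{\dagger}$- and $\bm{G}_2$-telescopes collapse to the initial errors $\tfrac12\|\check{\bm{W}}^0-\check{\bm{W}}^{*}\|^2_{\bm{G}_1^{\dagger}}$ and $\tfrac12\|\hat{\bm{W}}^0-\hat{\bm{W}}^{*}\|^2_{\bm{G}_2}$, while the dual telescope contributes $\tfrac{1}{2\gamma\rho}(\|\lambda^0\|^2-\|\lambda^K\|^2)$, which is non-positive because $\lambda^0=\bm{0}$ and so may be dropped. Dividing by $K$ and invoking Jensen's inequality $\bm{F}(\overline{\bm{Z}}^K)\le\tfrac{1}{K}\sum_{i=1}^{K}\bm{F}(\bm{Z}^i)$ then yields \eqref{eq22}.

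The main obstacle I anticipate is controlling the cross terms produced by the Jacobi-type $\check{\bm{w}}_i$-update. Because the $N$ primal blocks are refreshed in parallel from the stale $\check{\bm{W}}_{-i}^{k}$, the off-diagonal $A_i^{T}A_j$ couplings do not cancel as they would in a fully Gauss--Seidel ADMM, and the leftover residual $\rho A_i^{T}\sum_{j\in\mathcal{V}_i}A_j(\check{\bm{w}}_j^{k}-\check{\bm{w}}_j^{k+1})$ must be absorbed into the proximal operator $P_i$; this is precisely why the initial error is measured in the $\bm{G}_1^{\dagger}$-norm (the purely proximal part, with $\rho\bm{A}^{T}\bm{A}$ subtracted off) rather than in $\bm{G}_1$, and why $\bm{G}_1^{\dagger}\succ 0$ is a genuinely stronger hypothesis than the one sufficient for Theorem~1. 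A secondary technical point is the coupling of $\check{\bm{w}}_i$ and $\hat{\bm{w}}_i$ inside $f_i$: the Lipschitz majorization \eqref{eq8} shifts the gradient evaluation from $(\check{\bm{w}}_i^{k+1},\hat{\bm{w}}_i^{k})$ to $(\check{\bm{w}}_i^{k+1},\hat{\bm{w}}_i^{k+1})$ at the price of a $\bm{G}_3$-weighted correction, and $\bm{G}_2\succ\bm{G}_3$ is exactly the condition that lets this correction be neutralized inside the $\hat{\bm{W}}$-telescope without disrupting the $O(1/k)$ rate.
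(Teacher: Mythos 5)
Your proposal is correct and follows essentially the same route as the paper's Appendix C: a per-iteration bound built from the proximal optimality conditions of \eqref{eq3} and \eqref{eq5} together with the majorization and convexity inequalities of Lemma~1, a split of the $\check{\bm{W}}$-term into a $\bm{G}_1^{\dagger}$-weighted piece plus a dual contribution (using $\bm{A}\check{\bm{W}}^{*}=\bm{0}$ and the update \eqref{eq4}), cancellation of the $\bm{G}_3$ correction against the $\bm{G}_2$ increment via $\bm{G}_2\succ\bm{G}_3$, the $0<\gamma<2$ bound on the multiplier terms, telescoping with $\lambda^0=\bm{0}$, and Jensen's inequality. You have also correctly diagnosed why the rate is stated in the $\bm{G}_1^{\dagger}$-norm and why $\bm{G}_1^{\dagger}\succ 0$ is the operative positivity condition, which matches the paper's reasoning.
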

\begin{proof}
See the Appendix C.
\end{proof} 
Corollary 1 demonstrates that Algorithm 1 can guarantee a $O(1/k)$ convergence rate in average.  
 
 \begin{proposition}
 	Since $  \|A_i^TA_j   \| \leq \sqrt{n} ~(i,j\in\mathcal{V},i\neq j)$, to meet the requirements in Theorem 1 and Corollary 1, $\zeta_i$ should satisfy (\ref{eq20}) while $\tau_i$ follows
 \begin{equation}
 \begin{aligned}
 \tau_i  >\max  \left\{\rho d_i+4(N-1)\rho\sqrt{n},~\rho  \left(\frac{N}{2-\gamma}-1 \right) d_i  \right\}.\label{eq23}
 \end{aligned}  
 \end{equation} 
 \end{proposition}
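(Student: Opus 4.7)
The plan is to reduce the abstract positive-definiteness requirements in Theorem 1 and Corollary 1 to explicit scalar lower bounds on $\tau_i$ and $\zeta_i$, exploiting the block structure of $\bm{A}$ together with the a priori estimate $\|A_i^T A_j\|\leq \sqrt{n}$, and then combining the two sets of bounds via the componentwise maximum appearing in (\ref{eq23}).

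The condition on $\zeta_i$ is immediate: both $\bm{G}_2$ and $\bm{G}_3$ are block-diagonal with scalar-times-identity blocks, so Corollary 1's requirement $\bm{G}_2\succ\bm{G}_3$ reduces block-by-block to $\zeta_i>\frac{C_i}{m}(C_i+m)$, exactly the bound already enforced by Theorem 1 and Proposition 1. Hence (\ref{eq20})'s bound on $\zeta_i$ carries over to (\ref{eq23}) unchanged.

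The substantive step is to derive the first argument of the $\max$ in (\ref{eq23}) from the remaining Corollary 1 condition $\bm{G}_1^{\dagger}:=\bm{G}_1-\rho\bm{A}^T\bm{A}\succ 0$. I would start from $x^T\bm{G}_1^{\dagger} x = \sum_i(\rho d_i+\tau_i)\|x_i\|^2-\rho\|\bm{A}x\|^2$ for $x$ partitioned conformably into blocks $x_i$, expand $\|\bm{A}x\|^2=\sum_i d_i\|x_i\|^2+\sum_{i\neq j}x_i^T A_i^T A_j x_j$, and bound each cross term by $|x_i^T A_i^T A_j x_j|\leq \sqrt{n}\|x_i\|\|x_j\|$, restricting the summation to edges $(i,j)\in\mathcal{E}$ thanks to the sparsity $A_i^T A_j=\bm{0}$ for non-neighbors. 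A Young-type splitting $2\|x_i\|\|x_j\|\leq\|x_i\|^2+\|x_j\|^2$ then charges each edge contribution to a per-agent coefficient; aggregating over at most $d_i\leq N-1$ neighbors per agent produces a lower bound of the form $\sum_i(\tau_i-\rho d_i-4(N-1)\rho\sqrt{n})\|x_i\|^2$ on $x^T\bm{G}_1^{\dagger} x$, where the $\rho d_i$ summand arises from keeping the block-diagonal piece $\rho A_i^T A_i$ of $\rho\bm{A}^T\bm{A}$ in a block-Gershgorin-style accounting and the factor $4$ collects the constants from two successive symmetrizations of the cross terms. Strict positivity of this coefficient yields $\tau_i > \rho d_i + 4(N-1)\rho\sqrt{n}$, while the second argument of the $\max$ is inherited verbatim from Proposition 1.

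The main obstacle is the careful constant-tracking through the Young-inequality symmetrizations: a careless step that symmetrizes over all $N^2$ block pairs instead of only the edge-supported ones would replace the $N-1$ factor by $N^2$ and spoil the stated form of (\ref{eq23}), so the per-agent accounting must use the sparsity of $\bm{A}^T\bm{A}$ with care. Once this is done, the two scalar lower bounds combine via the $\max$ and Proposition 2 follows immediately.
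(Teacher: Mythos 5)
Your proposal follows essentially the same route as the paper's Appendix D: expand the quadratic form $\|\cdot\|^2_{\bm{G}_1^{\dagger}}$, bound the off-diagonal blocks via $\|A_i^TA_j\|\leq\sqrt{n}$ restricted to edges, symmetrize the cross terms with Young's inequality and aggregate per agent to obtain $\tau_i>\rho d_i+4(N-1)\rho\sqrt{n}$, reuse the Theorem 1 condition for $\zeta_i$, and combine with Proposition 1's bound through the max. The argument and the resulting (sufficient, not tight) constants match the paper's, so this is correct and not a genuinely different approach.
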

 \begin{proof}
See the Appendix D.
 \end{proof}
When $Q_i$ adopts the \textit{Standard Proximal} form, the requirement for $\zeta_i$ in Theorem 1 and Corollary 1 is unique. But as for $\tau_i$, the conditions become different as summarized in Proposition 2. From (\ref{eq23}) we can conclude that $\tau_i$ is proportional to the number of agents $N$. For a larger decentralized network, the larger $\tau_i$ is required for all agents. Moreover, $\tau_i$ is also proportional to $d_i$, the degree of agent $i$. This is natural since the updated $\check{\bm{w}}_i$ of task $i$ with more connections affects the updating process of more agents. Hence to ensure the convergence, the step size should be reduced at agent $i$. 
 
\section{Mobility-Aware Content Preference Prediction}
In this section, we first introduce the mobility prediction model based on the tool of \textit{Markov renewal processes} \cite{WCNCY17,Lee2006ACM} in a decentralized setting. Then by adopting the adaptive learning model, the predicted mobility pattern is integrated in DRMTL model. For the sake of secrecy, sharing the mobility information of MTs directly among agents is not allowed.
\subsection{Mobility Prediction Model}
The wireless user-mobility prediction has been extensively studied in \cite{Akoush2007,scellato2011nextplace,gambs2012next}. Here we only focus on predicting the sojourn time for MTs. We consider the moving pattern for MTs $\mathcal{M}=\{1,...,M \}$ located within $\mathcal{V}$. According to \cite{WCNCY17}, we model MT $m$ mobility as $\{(S_{m,l},T_{m,l}):l\geq 0 \}$, to predict the moving path and the sojourn time within $\mathcal{V}$ for a future time window $[0,t_d]$, where $T_{m,l}$ is the time instant of the $l$-th transition ($T_{m,0}=0$), and $S_{m,l}\in\mathcal{V}$ is the state at the $l$-th transition. The initial state for MT $m$ is supposed to be $S_{m,0}$, and the time that MT $m$ has stayed at $S_{m,0}$ before time $0$ is $t_{m,0}$. In the decentralized setting, agent $i$ can only predict the movement of MTs $\mathcal{M}_i$ to its neighbor agents $j\in\mathcal{V}_i$, where $\mathcal{M}_i\subset \mathcal{M},\mathcal{M}_i\cap \mathcal{M}_j=\emptyset,i,j\in\mathcal{V},i\neq j$ and $\cup_{i\in\mathcal{V}}\mathcal{M}_i=\mathcal{M}$. Considering MT $m$ located at agent $i$ where $S_{m,0}=i$, the transition probability of the embedded Markov chain is denoted as $\mathbbm{P}^{m,i}\in\mathbbm{R}_{+}^{N\times N}$, which is a row stochastic matrix and $\mathbbm{P}^{m,i}_{i,j}=0$ if $j\notin\mathcal{V}_i$. 
Denote $\Psi_{m,i}(x)$ as the probability mass function (pmf) of the sojourn time for MT $m$ staying within agent $i$. 
We define the moving path set for MT $m$ at agent $i$ within time $[0,t_d]$ as $\mathcal{S}_m:=\{\bm{S}_{m,1},...,\bm{S}_{m,|\mathcal{S}_m|}\} (1\leq|\mathcal{S}_m|\leq d_i)$ with initial state $S_{m,j,0}=S_{m,0}$ for any path $j (1\leq j\leq |\mathcal{S}_m|  )$, where the $j$-th path is $\bm{S}_{m,j}:=[S_{m,0},...,S_{m,j,|\bm{S}_{m,j}|}]$.   
Since MT $m$ stays at area $i$ for time $T_{m,0}$, the $1$-st transition for all paths in $\mathcal{S}_m $ is predicted to occur at time instant
\begin{equation}
T_{m,1} =\sum\nolimits_{x=T_{m,0} }^{\infty} x\Psi _{m,i} (x) .
\end{equation} 
Specially, we adopt the average sojourn time to predict the transition.
Hence if $T_{m,1}\geq t_d$, we have $|\mathcal{S}_m|=1$ and $\bm{S}_{m,1}=i$ with path probability $\Pr(\bm{S}_{m,1})=1$, otherwise $|\mathcal{S}_m|=d_i$ and $\bm{S}_{m,j}:=[i,S_{m,j,1}]$ where $S_{m,j,1}\in\mathcal{V}_i$ with path probability
\begin{equation}
	\Pr\big( \bm{S}_{m,j} \big)=\mathbbm{P}^{m,i}_{i, S_{m,j,1}},~1\leq j\leq d_i.
\end{equation} 
Then for agent $i$ and MT $m\in\mathcal{M}_i$, we can derive the predicted residence time to agent $j $ as
\begin{equation}
r_{m,i\to j} = \big[ t_d-T_{m,1}\big]_{+}\mathbbm{P}_{i,j}^{m,i},~j\in\mathcal{V}_i,
\end{equation}
where $[a]_{+}=a$ if $a>0$ otherwise $0$. The prediction accuracy of mobility patterns for our provided model will be analyzed in Section V. 

\subsection{Mobility-Aware Preference Prediction}
 

Inspired by the adaptive learning method \cite{Ren2018ICML} and distributed MTL model in \cite{Li2016TAES}, we modify the DRMTL in (P1) to the following problem (P2), which integrates the predicted mobility pattern of MTs,
 		
 \begin{equation}
 \begin{aligned}
  \min_{\check{\bm{W}} ,\hat{\bm{W}} }  &~\sum\nolimits_{i\in\mathcal{V} }\bigg(\tilde{f}_i\big(\check{\bm{w}}_i,\hat{\bm{w}}_i \big) + \frac{\mu_1}{2}\big\|\check{\bm{w}}_i \big\|^2+\frac{\mu_2}{2}\big\|\hat{\bm{w}}_i \big\|^2+ \\&\vphantom{\frac{1}{2}}   \qquad\qquad  \quad ~~~~ \frac{\mu_3}{2}\sum\nolimits_{j\in\mathcal{V}_i }c_{j,i}\big\|\bm{w}_{j,i}^{\text{loc}}- \big(\check{\bm{w}}_i+\hat{\bm{w}}_i \big)\big\|^2  \bigg),\\
 s.t. &~ \bm{A}  \check{\bm{W}} =\bm{0},\\
 &~\bm{w}_{i,j}^{\text{loc}}=\arg\min_{\bm{w}} \tilde{f}_{i\to j} \big(\bm{w}\big)+\frac{\mu_{12}}{2}\big\|\bm{w}\big\|^2,~i\in\mathcal{V}, j\in\mathcal{V}_i,
 \end{aligned}    \tag{P2}  
 \end{equation} 
 where 
 \begin{align}
  \tilde{f}_i\big(\check{\bm{w}}_i,\hat{\bm{w}}_i\big) =& \frac{1}{b_i}\sum\nolimits_{l=1}^{b_i}\phi_{i,l}\ell\big(\check{\bm{w}}_i,\hat{\bm{w}}_i,x_{i,l},y_{i,l}\big),\label{eq27} \\
 \tilde{f}_{i\to j } \big(\bm{w}\big)=&\frac{1}{b_i}\sum\nolimits_{l=1}^{b_i}\phi_{i\to j,l} \ell\big(\bm{w},x_{i,l},y_{i,l}\big).\label{eq28}
 \end{align}
 In (P2), $\bm{w}_{i, j}^{\text{loc}}$ denotes the weights transfered from agent $i$ to $j$. It contains the information for the leaving crowds of $\mathcal{M}_i$ to $j$. 
 The adaptive parameters $\phi_{i,l},\phi_{i\to j,l}\in( 0,1]$ are chosen by agent $i$ according to the predicted moving pattern of MTs $\mathcal{M}_i$. Hence (\ref{eq27}) and (\ref{eq28}) can be seen as a reweighting of the examples as \cite{Ren2018ICML}.  
 $\{c_{i,j}\}$ are the non-negative task combiners, which control the similarity of transformed weights and local ones of neighboring agents \cite{Li2016TAES}. $c_{i,j}$ is generated at agent $i$ according to the leaving crowd to agent $j$. To present the settings of $\phi_{i,l},\phi_{i\to j,l}$, we assume that sample $(x_{i,l},y_{i,l})$ is associated with MT $m(\in\mathcal{M}_i)$. Then we design 
 \begin{equation}
 	\phi_{i\to j,l} =\frac{r_{m,i\to j}}{t_d},~j\in\mathcal{V}_i; ~
 	\phi_{i,l}=1-\frac{1}{t_d}\sum\nolimits_{j\in\mathcal{V}_i}r_{m,i\to j}.\label{eq29}
 \end{equation}
 It is easy to verify that $\phi_{i,l}+\sum_{j\in\mathcal{V}_i} \phi_{i\to j,l}=1$. We define an $N\times N$ matrix $\bm{c}$ with entries $c_{i,j}$, which satisfies 
 $\sum\nolimits_{j}c_{i,j}\leq 1, ~c_{i,j}=0 ~\text{if} ~j\notin \mathcal{V}_i,~\forall i\in\mathcal{V}. $  The matrix $\bm{c}$ does not need to be symmetrical. We set the intertask combiners by
 \begin{equation}
 c_{i,j}=\left\{   
 \begin{aligned}
 &\frac{1}{d_i} \bigg[ 1-\exp\bigg(-\upsilon \sum\nolimits_{m\in\mathcal{M}_i}r_{m,i\to j}\bigg) \bigg] ,~j\in\mathcal{V}_i;\\
 &0,~j\notin \mathcal{V}_i,
 \end{aligned}
 \right.\label{eq30}
 \end{equation}
 where $\upsilon$ is a positive constant.


In what follows, we will state the rationality of settings (\ref{eq29}) and (\ref{eq30}). From (\ref{eq27}), $\phi_{i,l}$ controls the importance of sample $(x_{i,l},y_{i,l})$ in parameter training at agent $i$. While $\phi_{i\to j,l}$ controls the importance of $(x_{i,l},y_{i,l})$ on the transfered model parameter to agent $j$, which indirectly influence the output model parameters at agent $j$.
Considering MT $m\in\mathcal{M}_i$ related to $(x_{i,l},y_{i,l})$, the sample will be more important for training the model at agent $i$ if $m$ is predicted to stay longer time during $[0,t_d]$. Hence a larger $\phi_{i,l}$ is resulted. This also explains the choice of $\phi_{i\to j,l}$. On the other hand, the task combiner $c_{i,j}$ decides the amount of information provided by agent $i$ to $j$. If the flow of MTs from agent $i$ to $j$ is predicted to be dense, we set a large intertask combiner $c_{i,j}$ to enforce the model parameter of $j$ similar to the transfered $\bm{w}_{i,j}^{\text{loc}}$.

 

 
 \begin{proposition}
 	With intertask combiners and adaptive parameter designed at (\ref{eq29}) and (\ref{eq30}), (P2) and (P1) are consistent when there is no MT flow across areas. 
 \end{proposition}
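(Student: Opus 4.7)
The plan is to show that under the stated no-flow condition, every mobility-induced modification in (P2) — namely the reweighted loss $\tilde{f}_i$, the auxiliary regression problems that define $\bm{w}_{i,j}^{\text{loc}}$, and the transfer penalty $\frac{\mu_3}{2}\sum_{j\in\mathcal{V}_i}c_{j,i}\|\bm{w}_{j,i}^{\text{loc}}-(\check{\bm{w}}_i+\hat{\bm{w}}_i)\|^2$ — collapses to its unweighted/inactive counterpart, so the objective and constraints of (P2) become those of (P1) term by term.

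First, I would formalize the hypothesis ``no MT flow across areas'' as $r_{m,i\to j}=0$ for every $i\in\mathcal{V}$, every $j\in\mathcal{V}_i$, and every $m\in\mathcal{M}_i$. Plugging this into (\ref{eq29}) immediately yields $\phi_{i\to j,l}=0$ and $\phi_{i,l}=1$ for all $l\in\{1,\dots,b_i\}$, regardless of which MT the sample $(x_{i,l},y_{i,l})$ is associated with. Substituting these weights into (\ref{eq27}) gives $\tilde{f}_i(\check{\bm{w}}_i,\hat{\bm{w}}_i)=\frac{1}{b_i}\sum_{l=1}^{b_i}\ell(\check{\bm{w}}_i,\hat{\bm{w}}_i,x_{i,l},y_{i,l})=f_i(\check{\bm{w}}_i,\hat{\bm{w}}_i)$, so the data-fitting term of (P2) coincides with that of (P1).

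Next I would deal with the transfer penalty. Since $\sum_{m\in\mathcal{M}_i}r_{m,i\to j}=0$ for each $j\in\mathcal{V}_i$, definition (\ref{eq30}) yields $c_{i,j}=\frac{1}{d_i}[1-\exp(0)]=0$ for all $j\in\mathcal{V}_i$, and by construction $c_{i,j}=0$ for $j\notin\mathcal{V}_i$, so $c_{j,i}=0$ everywhere. Consequently the entire inner sum $\frac{\mu_3}{2}\sum_{j\in\mathcal{V}_i}c_{j,i}\|\bm{w}_{j,i}^{\text{loc}}-(\check{\bm{w}}_i+\hat{\bm{w}}_i)\|^2$ vanishes irrespective of what the transferred weights $\bm{w}_{j,i}^{\text{loc}}$ turn out to be; for completeness I would note that $\phi_{i\to j,l}=0$ also makes $\tilde{f}_{i\to j}\equiv 0$, so the auxiliary problem in (P2) degenerates to $\bm{w}_{i,j}^{\text{loc}}=\arg\min_{\bm{w}}\frac{\mu_{12}}{2}\|\bm{w}\|^2=\bm{0}$, but this is inessential because its coefficient is zero.

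Putting the three reductions together, (P2) simplifies to
\begin{equation*}
\min_{\check{\bm{W}},\hat{\bm{W}}}\ \sum_{i\in\mathcal{V}}\Big(f_i(\check{\bm{w}}_i,\hat{\bm{w}}_i)+\tfrac{\mu_1}{2}\|\check{\bm{w}}_i\|^2+\tfrac{\mu_2}{2}\|\hat{\bm{w}}_i\|^2\Big),\quad \text{s.t. } \bm{A}\check{\bm{W}}=\bm{0},
\end{equation*}
which is exactly (P1), establishing consistency. There is no real analytical obstacle here — the result is essentially a sanity check that the adaptive weights (\ref{eq29}) and combiners (\ref{eq30}) are designed so that all mobility-driven couplings switch off continuously as the predicted flows go to zero; the only mild subtlety worth flagging in the write-up is that $\bm{w}_{i,j}^{\text{loc}}$ becomes irrelevant (not undefined) in this limit, since its penalty coefficient $c_{j,i}$ vanishes.
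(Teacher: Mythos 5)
Your proof is correct and follows essentially the same route as the paper's: set $r_{m,i\to j}=0$, deduce $\phi_{i,l}=1$, $\phi_{i\to j,l}=c_{i,j}=0$ from (\ref{eq29}) and (\ref{eq30}), and observe that $\tilde{f}_i$ reduces to $f_i$ while the transfer penalty vanishes, so (P2) collapses to (P1). Your additional remark that $\bm{w}_{i,j}^{\text{loc}}$ remains well-defined but irrelevant is a harmless elaboration beyond what the paper states.
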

 \begin{proof}
 	When there does not exist the transition of MTs among agents, $r_{m,i\to j}=0,\forall i\in\mathcal{V},j\in\mathcal{V}_i $ and hence $\phi_{i,l} = 1,\phi_{i\to j,l}= c_{i,j}=0,\forall i\in\mathcal{V}, j\in\mathcal{V}_j,l=1,...,b_i$. Then $\tilde{f}_i(\check{\bm{w}}_i,\hat{\bm{w}}_i)=f_i(\check{\bm{w}}_i,\hat{\bm{w}}_i)$ and (P2) reduces to (P1).
 \end{proof}

 \begin{algorithm}[h]
 	\caption{} 
 	\begin{algorithmic}[1]
 		\STATE \textbf{initialize}: set $ \{\check{\bm{w}}^0_i,\hat{\bm{w}}_i^0 |i\in\mathcal{V} \} $ randomly and $\lambda^0=\bm{0}$.  
 		\STATE {\bfseries agents $i=1$ to $N$}:
 		\STATE calculate $\{\phi_{i,l},\phi_{i\to j,l},c_{i,j},\bm{w}^{\text{loc}}_{i,j} |l=1,...,b_i,j\in\mathcal{V}_i \}$ and communicate $\bm{w}^{\text{loc}}_{i,j} $ to agent $j\in\mathcal{V}_i$.
 		\FOR{$k=0,1,...$ } 
 		\STATE follow steps 3-7 in Algorithm 1 but substituting $\mathcal{L}_{\rho}^1$ with $\mathcal{L}_{\rho}^2$.
 		\ENDFOR 
 	\end{algorithmic} 
 \end{algorithm}

  The augmented Lagrange function for (P2) is given by
 \begin{equation}
 \begin{aligned}
 \mathcal{L}_{\rho}^2\big(\check{\bm{W}},\hat{\bm{W}},\lambda\big)=\sum\nolimits_{i }\bigg(  \tilde{f}_i\big(\check{\bm{w}}_i,\hat{\bm{w}}_i \big) + \frac{\mu_1}{2}\big\|\check{\bm{w}}_i \big\|^2+\frac{\mu_2}{2}\big\|\hat{\bm{w}}_i \big\|^2+~~~~~~ &\\\frac{\mu_3}{2}\sum\nolimits_{j\in\mathcal{V}_i }c_{j,i}\big\|\bm{w}_{j,i}^{\text{loc}}-\big(\check{\bm{w}}_i+ \hat{\bm{w}}_i\big) \big\|^2+\lambda^T\bm{A}\check{\bm{W}} + \frac{\rho}{2}\big\| \bm{A}\check{\bm{W}} \big\|^2\bigg).&
 \end{aligned}
 \end{equation}
Adopting hybrid Jacobi and Gauss-Seidel type ADMM as Algorithm 1, the update for (P2) follows (\ref{eq3})-(\ref{eq5}) but substituting $\mathcal{L}_{\rho}^1$ with $\mathcal{L}^2_{\rho}$. 
Then we provide Algorithm 2 to obtain the optimal solution for (P2). Compared with the process of Algorithm 1, the difference is that agent $i(\in\mathcal{V})$ needs to calculate the transfered weights $\bm{w}_{i,j}^{\text{loc}}$ first. 


\subsection{Convergence Analysis}
To analyze the convergence of Algorithm 2, we define
\begin{equation}
	\mathbbm{f}_i\big(\check{\bm{w}}_i,\hat{\bm{w}}_i\big)=\tilde{f}_i\big(\check{\bm{w}}_i,\hat{\bm{w}}_i\big) + \frac{\mu_3}{2}\sum\nolimits_{j\in\mathcal{V}_i}c_{j,i}\big\|\bm{w}_{j,i}^{\text{loc}}-\big(\check{\bm{w}}_i+\hat{\bm{w}}_i \big) \big\|^2,\label{eq35}
\end{equation}
and $\mathbb{F}(\check{\bm{W}},\hat{\bm{W}})=\sum\nolimits_{i}(\mathbbm{f}_i(\check{\bm{w}}_i,\hat{\bm{w}}_i)+ g(\check{\bm{w}}_i)+h(\hat{\bm{w}}_i)  )$.
Then we follow the same argumentation as in the convergence proof for Algorithm 1.
\begin{assumption}
	$\tilde{f}_i$ and $\tilde{f}_{i\to j}$ are differentiable and jointly convex over $\check{\bm{w}}_i$ and $\hat{\bm{w}}_i$. The gradient $\nabla \tilde{f}_i$ is Lipschitz continuous with constant $\tilde{C}_i$.
	
\end{assumption}
With Assumption 3, we can present the following property of $\mathbbm{f}_i$ by Lemma 3.
\begin{lemma}
	 $\mathbbm{f}_i $ satisfies Assumption 2 and Lemma 1 but with Lipschitz constant
	  \begin{equation}
	  	\mathbbm{C}_i = \sqrt{2\tilde{C}_i^2 + 4\mu_3^2\left(\sum\nolimits_{j\in\mathcal{V}_i }c_{j,i} \right)^2}.
	  \end{equation}	 
\end{lemma}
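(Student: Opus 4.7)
The plan is to decompose $\mathbbm{f}_i$ into the two summands shown in (\ref{eq35}) and to transfer Assumption 3 and the Lemma 1 inequalities through the quadratic penalty term, tracking the extra constant it contributes to the Lipschitz bound.

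First I would verify convexity and differentiability of $\mathbbm{f}_i$. The penalty $\frac{\mu_3}{2}\sum_{j\in\mathcal{V}_i}c_{j,i}\|\bm{w}_{j,i}^{\text{loc}}-(\check{\bm{w}}_i+\hat{\bm{w}}_i)\|^2$ is a non-negative combination of squared norms of affine functions of $\bm{z}_i=(\check{\bm{w}}_i,\hat{\bm{w}}_i)$, hence smooth and jointly convex; together with Assumption 3 on $\tilde{f}_i$ this gives the same properties for the sum $\mathbbm{f}_i$, settling the qualitative half of Assumption 2.

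Next I would bound the Lipschitz constant of $\nabla\mathbbm{f}_i$. Computing the partial gradients of the penalty shows that the $\check{\bm{w}}_i$- and $\hat{\bm{w}}_i$-blocks are identical, both equal to $v(\bm{z}_i):=\mu_3\sum_{j\in\mathcal{V}_i}c_{j,i}(\check{\bm{w}}_i+\hat{\bm{w}}_i-\bm{w}_{j,i}^{\text{loc}})$. From this I would estimate $\|\nabla q(\bm{z}_i^1)-\nabla q(\bm{z}_i^2)\|$ in terms of $\|\bm{z}_i^1-\bm{z}_i^2\|$ by using the elementary inequality $\|a+b\|^2\leq 2(\|a\|^2+\|b\|^2)$ to pass from $\|(\check{\bm{w}}_i^1-\check{\bm{w}}_i^2)+(\hat{\bm{w}}_i^1-\hat{\bm{w}}_i^2)\|^2$ to a multiple of $\|\bm{z}_i^1-\bm{z}_i^2\|^2$. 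Combining the resulting estimate with the Lipschitz bound $\tilde{C}_i$ for $\nabla\tilde{f}_i$ through one further application of the same inequality gives $\|\nabla\mathbbm{f}_i(\bm{z}_i^1)-\nabla\mathbbm{f}_i(\bm{z}_i^2)\|^2\leq \mathbbm{C}_i^2\,\|\bm{z}_i^1-\bm{z}_i^2\|^2$ with $\mathbbm{C}_i$ of the claimed form. Once Lipschitz continuity of $\nabla\mathbbm{f}_i$ with constant $\mathbbm{C}_i$ and joint convexity are in hand, inequality (\ref{eq8}) for $\mathbbm{f}_i$ follows immediately by Fact~2 in \cite{Drori2015} exactly as in the proof of Lemma 1, whereas (\ref{eq9}) and (\ref{eq10}) involve only $g$ and $h$, which are unchanged, and carry over verbatim.

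The main point requiring care is the bookkeeping that produces the exact form of $\mathbbm{C}_i$: the penalty gradient duplicates the same vector $v$ across both coordinate blocks of $\bm{z}_i$, so one must trace the factors of two introduced by this duplication, by the $\|a+b\|^2\leq 2(\|a\|^2+\|b\|^2)$ estimate applied to $\Delta\check{\bm{w}}_i+\Delta\hat{\bm{w}}_i$, and by the final combination with $\tilde{C}_i$, arranging them so the coefficients accumulate to exactly $2\tilde{C}_i^2$ and $4\mu_3^2(\sum_{j\in\mathcal{V}_i}c_{j,i})^2$ rather than looser multiples. Beyond this arithmetic, the argument is a direct copy of the proof of Lemma 1 with $\tilde{f}_i$ in place of $f_i$.
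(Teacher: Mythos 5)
Your proposal follows essentially the same route as the paper's Appendix E: establish joint convexity and differentiability of $\mathbbm{f}_i$ from Assumption 3 plus the convexity of the quadratic transfer penalty, note that the penalty contributes the identical vector $\mu_3\sum_{j\in\mathcal{V}_i}c_{j,i}(\check{\bm{w}}_i+\hat{\bm{w}}_i-\bm{w}_{j,i}^{\text{loc}})$ to both partial-gradient blocks, apply $\|a+b\|^2\leq 2(\|a\|^2+\|b\|^2)$ twice to absorb this into a bound on $\|\bm{z}_i^1-\bm{z}_i^2\|^2$, and then carry over inequality (\ref{eq8}) via Fact~2 of \cite{Drori2015} with (\ref{eq9})--(\ref{eq10}) unchanged. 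The ``bookkeeping of factors of two'' you flag is indeed the only delicate point, and your accounting matches the paper's own derivation of $\mathbbm{C}_i$.
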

\begin{proof}
	See the Appendix E.
\end{proof}

 \begin{theorem} (Convergence of Algorithm 2)
 	Following the conditions in Corollary 1, letting $\tau_i$ satisfy (\ref{eq23}) and
 	\begin{equation}
 	 \zeta_i>\frac{\mathbbm{C}_i }{m}\big(\mathbbm{C}_i +m\big),
 	\end{equation}  
 	then the sequence $\bm{u}^k$ generated by Algorithm 2 converges to the global optimal solution $\bm{u}^*$ of problem (P2) with rate (\ref{eq22}).
 \end{theorem}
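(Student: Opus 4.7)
The plan is to reduce the convergence analysis of Algorithm 2 to the machinery already developed for Algorithm 1. Observe that problem (P2) has exactly the same block structure as (P1): the constraint $\bm{A}\check{\bm{W}}=\bm{0}$ is unchanged, the regularizers $g(\check{\bm{w}}_i)$ and $h(\hat{\bm{w}}_i)$ are identical, and the only modification is that the local loss $f_i(\check{\bm{w}}_i,\hat{\bm{w}}_i)$ is replaced by $\mathbbm{f}_i(\check{\bm{w}}_i,\hat{\bm{w}}_i)$ defined in~(\ref{eq35}). Since the transferred weights $\bm{w}_{j,i}^{\text{loc}}$ are computed in the initialization step of Algorithm 2 and then held constant during the ADMM iterations, they act as fixed data from the perspective of the subsequent updates, so the additional quadratic penalty in~(\ref{eq35}) simply augments the loss function.

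First I would invoke Lemma 3 to conclude that $\mathbbm{f}_i$ inherits all the properties used in the convergence proof of Algorithm 1: it is differentiable and jointly convex in $(\check{\bm{w}}_i,\hat{\bm{w}}_i)$, and its gradient is Lipschitz continuous with constant $\mathbbm{C}_i$. Consequently, Lemma~1 holds for $\mathbbm{f}_i$ with $C_i$ replaced by $\mathbbm{C}_i$. Next I would trace through the statement and proof of Lemma~2: since it uses the loss function only through the inequality~(\ref{eq8}), the same derivation produces the identical decrease inequality $\|\bm{u}^k-\bm{u}^*\|^2_{\bm{G}}-\|\bm{u}^{k+1}-\bm{u}^*\|^2_{\bm{G}}\geq \|\bm{u}^k-\bm{u}^{k+1}\|^2_{\bm{M}}$ for the iterates of Algorithm~2, provided the block $\bm{G}_3$ is redefined as $\text{blkdiag}\bigl(\tfrac{\mathbbm{C}_1}{m}(\mathbbm{C}_1+m),\ldots,\tfrac{\mathbbm{C}_N}{m}(\mathbbm{C}_N+m)\bigr)$.

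With this updated $\bm{G}_3$, positive definiteness of $\bm{M}$ and of $\bm{G}_2-\bm{G}_3$ translates directly into the revised condition $\zeta_i>\tfrac{\mathbbm{C}_i}{m}(\mathbbm{C}_i+m)$ stated in the theorem, while the conditions on $\tau_i$ and $\gamma$ remain unchanged since they originate from the $\check{\bm{w}}$-block and the constraint matrix $\bm{A}$, neither of which is affected by the loss modification. Thus Theorem~1 applies verbatim, yielding global convergence of $\{\bm{u}^k\}$ to an optimal solution $\bm{u}^*$ of (P2). Finally, because Corollary~1 derives the $O(1/k)$ ergodic rate~(\ref{eq22}) from the same descent inequality together with convexity of the objective, the bound carries over with $\bm{F}$ replaced by $\mathbb{F}$ and with $\mathbbm{C}_i$ substituted for $C_i$ in the definition of $\bm{G}_3$.

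The only delicate point is to justify that the coupling between $\check{\bm{w}}_i$ and $\hat{\bm{w}}_i$ introduced by the transfer penalty $\tfrac{\mu_3}{2}\sum_{j\in\mathcal{V}_i}c_{j,i}\|\bm{w}_{j,i}^{\text{loc}}-(\check{\bm{w}}_i+\hat{\bm{w}}_i)\|^2$ does not break the Jacobi-then-Gauss-Seidel splitting exploited in the proof of Lemma~2. This is exactly what Lemma~3 takes care of by computing the joint Lipschitz constant $\mathbbm{C}_i$, so the hard part is already absorbed into the constant appearing in the condition on $\zeta_i$. Once this is noted, the rest of the argument is a direct transcription of the proofs in Appendices A–D with $C_i\mapsto\mathbbm{C}_i$ and $f_i\mapsto\mathbbm{f}_i$, so I would simply state the reduction and refer to those appendices rather than rewrite the calculations.
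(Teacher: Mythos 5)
Your proposal is correct and follows essentially the same route as the paper, which proves Theorem 2 by invoking Lemma 3 and then repeating the arguments of Theorem 1 and Corollary 1 with $C_i$ replaced by $\mathbbm{C}_i$ (and hence $\bm{G}_3$ redefined accordingly). Your additional remarks — that $\bm{w}_{j,i}^{\text{loc}}$ is fixed during the ADMM iterations and that the cross-block coupling from the transfer penalty is absorbed into the joint Lipschitz constant — are exactly the observations that make the paper's one-line reduction legitimate.
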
  
 \begin{proof}
With Lemma 3, it can be proved similarly as Theorem 1 and Corollary 1 but substituting $C_i$ with $\mathbbm{C}_i  $.
 \end{proof}
It is worth noting that the conditions derived in Theorems 1 and 2, Corollary 1 and Propositions 1 and 2 are all sufficient to make Algorithms 1 and 2 converge. Moreover, the proofs can be extended to the matrix case $\check{\bm{w}}_i,\hat{\bm{w}}_i\in\mathbbm{R}^{n_1\times n_2 }$ by substituting the product $a^Tb$ by $\langle a,b  \rangle =\text{tr}(a^Tb)$.

\section{Simulation and Discussion}
In this section, we will provide simulations based on real dataset for the presented algorithms, and give discussions on the numerical results.


\subsection{Convergence Experiment}
We first evaluate the convergence of Algorithms 1 and 2 by using the least square loss at agent $i$ as
\begin{equation}
	\ell\big(\check{\bm{w}}_i,\hat{\bm{w}}_i,x_{i,l},y_{i,l} \big)= \frac{1}{2}\big\| \big(\check{\bm{w}}_i + \hat{\bm{w}}_i \big)^Tx_{i,l} - y_{i,l} \big\| ^2.\label{eq38}
\end{equation}
With (\ref{eq38}) it is easy to verify that Assumptions 2 and 3 can be satisfied. Denoting $X_i = [x_{i,1},...,x_{i,b_i}]$ and $Y_{i}=[y_{i,1},...,y_{i,b_i}]$, the local objective becomes
$f_i(\check{\bm{w}}_i,\hat{\bm{w}}_i) = \frac{1}{2b_i}\|(\check{\bm{w}}_i+\hat{\bm{w}}_i)^TX_i-Y_i  \|^2$.
\begin{figure} [h]
	\vskip 0.2in
	\begin{center}
		\centerline{\includegraphics[width=85 mm]{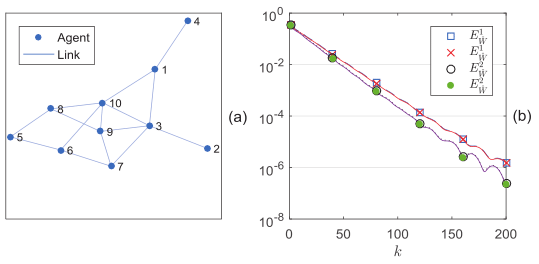}}
		\caption{ (a) the connected graph $\mathcal{G}$; (b) learning accuracy of Algorithms 1 and 2.}
		\label{fig2}
	\end{center}
	\vskip -0.2in
\end{figure} 

In the simulation setup, we let the number of agents $N=10$ with connections $|\mathcal{E}|=15$, the number of training samples $b_i=10~(\forall i\in\mathcal{V})$, input and output dimension $n=10$, $\nu=1$. The regularization parameters are $\mu_1=\mu_2=\mu_3=\mu_{12}=1$ while step penalties are $\tau_i=\zeta_i=1$ and $\rho=\gamma=1$. The connected graph $\mathcal{G}$ is generated randomly shown as Fig. \ref{fig2} (a). The entries of input $x_{i,l}$ and output $y_{i,l}$ are generated randomly according to uniform distribution $\mathcal{U}(0,1)$. Besides, we generate $\phi_{i\to j,l}\sim\mathcal{U}(0,\frac{1}{N})$ and hence $\phi_{i,l} = 1-\sum_{j\in\mathcal{V}_i}\phi_{i \to j,l}$
The intertask combiners for agent $i$ are generated by $c_{i,j}\sim\mathcal{U}(0,\frac{1}{N})(j\in\mathcal{V}_i)$.


Denote the optimal solutions for (P1) and (P2) as $\bm{u}^1_{\text{opt}}$ and $\bm{u}^2_{\text{opt}}$ respectively, which are obtained by solving centralized cases for (P1) and (P2) with adopting Alternating Optimization (AO) method \cite{Beck2015OnTC}. 
Then we define the accuracy with respect to $\check{\bm{W}}$ and $\hat{\bm{W}}$ in Algorithms 1 and 2 as
\begin{equation}
	E_{\check{\bm{W}}}^{1,2}(k) = \sqrt{ \frac{1}{N n} \big\|\check{\bm{W}}^k-\check{\bm{W}}_{\text{opt}}^{1,2} \big\|^2},~ E_{\hat{\bm{W}}}^{1,2}(k) = \sqrt{ \frac{1}{N n} \big\|\hat{\bm{W}}^k-\hat{\bm{W}}_{\text{opt}}^{1,2} \big\|^2},
\end{equation}
which are shown in Fig. \ref{fig2} (b). As we can observe, all the accuracy $E_{\check{\bm{W}}}^{1,2}$ and $E_{\hat{\bm{W}}}^{1,2}$ reduce with iteration $k$, which supports the convergence analysis in Theorems 1 and 2.

\subsection{Experiments on Real Dataset}

The mobility prediction model based on the real trace data set \textit{Geolife Trajectories} \cite{Zheng2009ACM,Zheng2008ACM,Zheng2010} is investigated. We select $M=20$ MTs out of the dataset with total $14587$ paths. These paths were recorded during $9$ months. As shown in Fig. \ref{Fig4} (a), the locations of MTs are exemplified based on GPS records (latitude and longitude), and some of the paths of each MT are shown in different colored dots respectively.  
The windowed area is roughly $5 \times 5\text{km}^2$ large between latitude $(39.97, 40.02)$
and longitude $(116.30, 116.35)$. Assume that the agent $i$ covers an square area with size $\frac{5}{s}\times \frac{5}{s}\text{km}^2$. Then in considered region shown in Fig. \ref{Fig4} (a), we have agent $N=s^2$. Since traces of MT are repeated, the pmf $\Psi _{m,i} (x)(i\in\mathcal{V})$ can be calculated statistically. 
 \begin{figure}[h] 
	\vskip 0.2in
	\begin{center}
		\centerline{\includegraphics[width=97 mm]{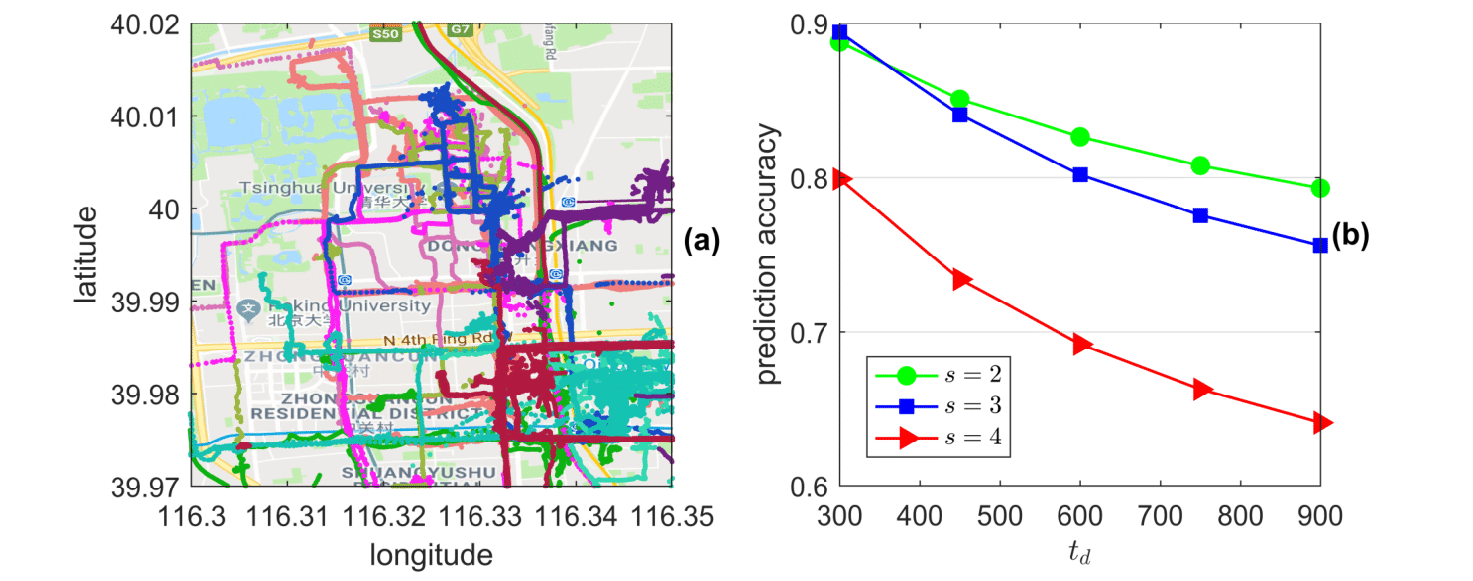}}
		\caption{(a) real traces of MTs; (b) accuracy of distributed mobility prediction based on \textit{Markov renew process} model.}
		\label{Fig4}
	\end{center}
	\vskip -0.2in
\end{figure} 
 
In Fig. \ref{Fig4} (b), we verify the prediction accuracy of the \textit{Markov renewal process} based mobility prediction model for these $ 20$ MTs. We average the accuracy over MTs by conducting the prediction $100$ times for the time window $[0,t_d]$. In prediction of MT $m$, we generate $T_{m,0}\sim\Psi_{m,S_{m,0}}$. 
We evaluate the prediction accuracy by counting the difference between real path and all the possible predicted paths, the detail of which can be found in \cite{WCNCY17}. 
As shown in Fig. \ref{Fig4} (b), the average accuracy for considered MTs degrades when we predict for a larger time window $t_d$, which is due to the increased uncertainty of MT mobility. Moreover, when the region is covered by more agents, e. g. with a smaller $s$, the prediction accuracy reduces. This is because increased agents may expand the number of possible paths within $t_d$. Thus it is harder to predict the movements.  
As a conclusion, for the considered $t_d$ and $s$, the accuracy can be guaranteed at least $0.63$ with the presented mobility model.
 
 

  \begin{figure} [h]
	\vskip 0.2in
	\begin{center}
		\centerline{\includegraphics[width=88 mm]{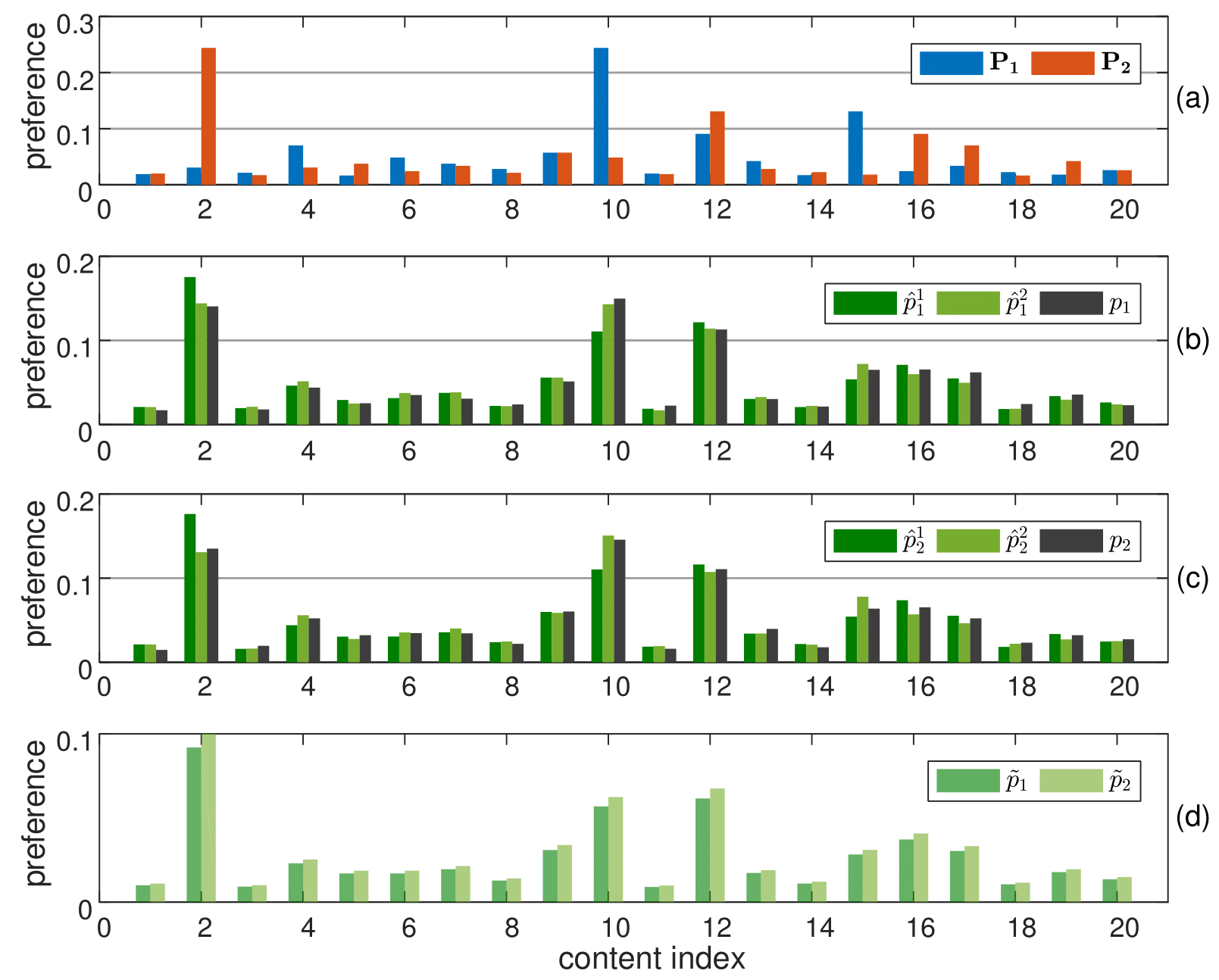}}
		\caption{(a) content preference for $K=2$ groups; (b), (c) predicted content preference $\hat{p}^{1,2}_{1,2}$ by Algorithms 1 and 2, and observed preference $p_{1,2}$ at agents 1 and 2; (d) common preference $\tilde{p}_{1,2}$ predicted at agents 1 and 2.}
		\label{fig5}
	\end{center}
	\vskip -0.2in
\end{figure}

Then we evaluate the prediction results of Algorithms 1 and 2 with predicted moving patterns of MTs. We let $F=20$, $s=3$, $K=2$, $\iota_i=0.9$, $t_d=30$min and randomly choose one path for each MT. Besides we generate $T_{m,0}\sim\Psi_{m,S_{m,0}}$. Each MT in $\mathcal{M}$ is randomly assigned to one of the group in $\mathcal{K}$. The content preference for group $i\in\mathcal{K}$ follows a Zipf-like distribution with shape parameter $\iota_i$,
\begin{equation}
	 \bm{\text{P}}_{i }\big(\pi_{i,f}\big) = \frac{f^{-\iota_i}}{\sum\nolimits_{l\in\mathcal{F}}l^{-\iota_i}},~i\in\mathcal{K},~f\in\mathcal{F},
\end{equation}
where $\bm{\pi}_i$ is a random permutation of $\mathcal{F}$ \cite{Guo2017TC}. The content preference for two groups are presented in Fig. \ref{fig5} (a). The dataset $\mathcal{D}_i$ at agent $i$ is collected in time $[-t_d,0]$. We assume the request frequency for MTs is homogeneous as $R_f = 2$/min. Considering the sample $(x_{i,l},y_{i,l})$, which is a record for one request at agent $i$, the $x_{i,l}$ represents the feature of associated MT while $y_{i,l}$ records the requested file. Without loss of generality, we assume the MTs with similar features have analogous preference. Hence we use the index of groups of MT to generate $x_{i,l}\in\{0,1\}^{ K\times 1}$, e.g., the associated MT belongs to group 1, then $x_{i,l}=[1, 0]^T$ otherwise $x_{i,l}=[0,1]^T$. We denote the target as $y_{i,l}\in\{0,1\}^{ F\times 1}$, e.g., if $f$ is the requested file, then $y_{i,l,f}=1$ while the other elements are $0$. We denote $p_i$ as the content preference observed by agent $i(\in\mathcal{V})$ during time $[0,t_d]$. We run Algorithms 1 and 2 for $300$ iterations with parameter $\mu_1=\mu_2=0.1$, $\mu_{12}=0.01$, while the other parameters are the same with previous subsection.
The predicted content preference by agent $i$ is calculated by $\hat{p}_i^{1,2}=\frac{\sum_{m\in\mathcal{M}_i} (\check{\bm{w}}_i^{1,2}+\hat{\bm{w}}_i^{1,2})^Tx_m}{ \sum_{m\in\mathcal{M}_i} \bm{1}\cdot(\check{\bm{w}}^{1,2}_i+\hat{\bm{w}}^{1,2}_i)^Tx_m  }$, where $\check{\bm{w}}_i^{1,2}$ and $\hat{\bm{w}}_i^{1,2}$ are obtained by Algorithms 1 and 2 respectively and $\bm{1}=[1,...,1]$. $x_m(m\in\mathcal{M}_i)$ is the inputs observed at time $0$ for agent $i$.
 In particular, we investigate the predicted preference for agents 1 and 2. $ p_{1,2}$ and $\hat{p}_{1,2}^{1,2}$ are presented in Fig. \ref{fig5} (b) and (c), which illustrate that without considering MTs mobility, both $\hat{p}_1^1$ and $\hat{p}_2^1$ make wrong prediction on the most popular content. But $\hat{p}^2_1$ and $\hat{p}_2^2$ can give accurate prediction. The common preference predicted at agent $i$ with Algorithm 2 is given by $\tilde{p}_i = \frac{\sum_{m\in\mathcal{M}_i} (\check{\bm{w}}^2_i)^Tx_m}{ \sum_{m\in\mathcal{M}_i}\bm{1} \cdot (\check{\bm{w}}^2_i+\hat{\bm{w}}^2_i)^Tx_m }$. $\tilde{p}_1$ and $\tilde{p}_2$ are shown in Fig. \ref{fig5} (d). It demonstrates that the basis $\bm{w}_0$ learned jointly across agents can capture the common content interests.

 
Moreover, we average the prediction error on learned preference over the agents by carrying out the simulation for $100$ times. 
The average estimation error of Algorithms 1 and 2 is defined by
\begin{equation}\label{eq48}
	\varepsilon^{1,2} = \frac{1}{N}\sum\nolimits_{i\in\mathcal{V}}\big\| \hat{p}_i^{1,2}-p_i \big\|_1.
\end{equation}
As presented in Fig. \ref{fig6} (a) and (b), 
the error of predicted geography preference achieved by the mobility-aware DRMTL model is always smaller than that of DRMTL. With growing $F$, the error $\varepsilon^1$ and $\varepsilon^2$ both exhibit an increasing trend. This is because more contents need to be predicted for a larger $F$ but the number of training samples is fixed. Moreover, due to the fact that it is harder to learn a more concentrated preference without adequate data, the prediction performance of model (P1) and (P2) both become worser for a larger shape parameter, e.g., $\iota_i=1.2$.  
Hence from Fig. \ref{fig5} and Fig. \ref{fig6} (a), (b), we can conclude that $\hat{p}_i^2$, which is obtained by adaptively reweighting the learning samples from mobility patterns and transferring information across agents, can more accurately predict $p_i $ than $\hat{p}_i^1$ given by DRMTL model. This shows the feasibility of Algorithm 2 in real mobile applications.

  \begin{figure} 
	\begin{center}
		\includegraphics[width= 88 mm]{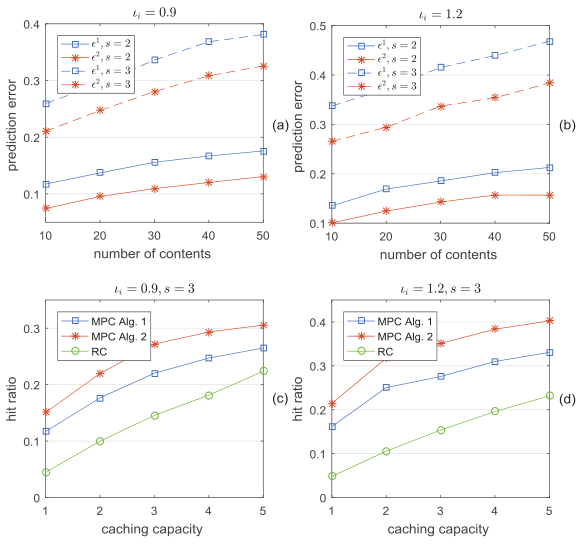} 
		\caption{(a), (b) geography preference prediction error with $\iota_i=\{ 0.9,1.2\}$; (c), (d) hit ratio of policies with $\iota_i=\{0.9,1.2\}$, $s=3$.}
		\label{fig6}
	\end{center}
	\vskip -0.2in
\end{figure}

  
  Finally, we evaluate the average hit ratio defined in \cite{Chen2017TWC} for each agent, which represents the proportion of served requests by the caching of agents during time $[0, t_d]$. We assume the caching capacity for agents is $\theta (\in\mathbbm{N}_{+})$ and $\theta\leq F$. In proactive caching, each agent stores the contents at time $0$ according to the following two policies:
   \begin{enumerate}[leftmargin=*]
   	\item {\textit{Most popular caching}} (MPC) \cite{CMG16}: each agent proactively stores content according to descending order of its predicted preference until $\theta$ is occupied;
   	\item{\textit{Random caching}} (RC) \cite{CMG16}: each agent proactively stores content in random manner until $\theta$ is occupied.
   \end{enumerate} 
   For the MPC method, we consider two approaches, the Algorithms 1 and 2, to predict the content preference for agents. Since the stored contents are randomly selected for RC policy, there is no need to conduct the prediction of geography preference.
   In the simulation setup, we fix $F=20$ and $s=3$. The results on hit ratio are demonstrated in Fig. \ref{fig6} (c) and (d). It is seen that the hit ratio achieved by MPC with the geography preference predicted by mobility-aware DRMTL model can outperform the MPC with preference from Algorithm 1 and RC scheme. Since more requests can be served by agent locally, with enlarging the caching capacity $\theta$, the hit ratio for all methods grow. The increase of RC follows a constant rate, while the rate of MPC policies degrade. This is expected since the contents proactively cached are randomly chosen in RC policy. But the MPC will choose to store those contents that are more likely to be requested, and the growth rate of cumulative preference for the cached contents will reduce with increasing $\theta$, which is caused by the nature of Zip-f like distribution. It is known that the preference of MTs become more concentrated when the shape parameter $\iota_i$ gets larger. Hence with $\iota_i=1.2$, the hit ratios achieved by all policies are better than that of $\iota_i=0.9$. This does not violate the trends shown in Fig. \ref{fig6} (a) and (b), where prediction error degrades with $\iota_i$, since only the index of descending order of predicted preference counts for the hit ratio in proactive caching problem.

\section{Conclusions }

We study  mobility-aware preference prediction for decentralized networks with heterogeneous interests for the MTs. The DRMTL model is first formulated to tackle the learning problem with stationary MTs, which is solved by the proposed proximal ADMM based Algorithm 1 in a hybrid Jacobian and Gauss-Seidel type. Then we extend the DRMTL model by reweighting training samples and introducing a transfer penalty in objective. With the modification, the mobility-aware DRMTL model, which is consistent with the DRMTL model, can successfully capture the variation of geography preference when mobility patter of MTs is predictable. Our real trace driven simulation demonstrates that the MPC scheme with preference predicted by mobility-aware DRMTL model can achieve better performance on hit ratios than MPC with preference from DRMTL and RC scheme.


 \begin{appendices}

	\section{Proof of Lemma 2}

	Define the intermediate variable as $\bm{z}^{k+\frac{1}{2}}_i=[\check{\bm{w}}^{k+1}_i,\hat{\bm{w}}^k_i]$. Then from three point inequality (\ref{eq8}) in Lemma 1, we have
	\begin{equation}\label{eq:6}
	\begin{aligned}
	&\vphantom{\frac{1}{2}} f_i \big( \bm{z}_i^{k+1} \big) - f_i \big( \bm{z}_t^*  \big) 
	\leq \big(\bm{z}_i^{k+1}-\bm{z}_i^* \big)^T\nabla f_i \big(\bm{z}_i^{k+\frac{1}{2}}\big) + \frac{C_i}{2}\big\|\bm{z}_i^{k+1} - \bm{z}_i^{k+\frac{1}{2}}  \big\|^2\\
	\vphantom{\frac{1}{2}} &= \big(\check{\bm{w}}_i^{k+1}-\check{\bm{w}}_i^* \big)^T\nabla_{\check{\bm{w}}_i} f_t \big( \bm{z}_i^{k+\frac{1}{2}} \big) + \big(\hat{\bm{w}}_i^{k+1}-\hat{\bm{w}}_i^* \big)^T\nabla_{\hat{\bm{w}}_i} f_i \big( \bm{z}_t^{k+\frac{1}{2}} \big) +\\&\vphantom{\frac{1}{2}}  \frac{C_i}{2} \big\|\hat{\bm{w}}_i^k-\hat{\bm{w}}_i^{k+1}  \big\|^2.
	\end{aligned}		
	\end{equation}
	Following (\ref{eq9}) and (\ref{eq10}) in Lemma 1, it is easy to show that
	\begin{equation}\label{eq:7} 
	\begin{aligned}
	\vphantom{\frac{1}{2}}g \big(\check{\bm{w}}_i^{k+1}\big)-g\big(\check{\bm{w}}_i^* \big) \leq&  \big(\check{\bm{w}}_i^{k+1} - \check{\bm{w}}_i^*\big)^Tg'\big(\check{\bm{w}}_i^{k+1} \big),\\   
	h \big(\hat{\bm{w}}_i^{k+1} \big)-h\big(\hat{\bm{w}}_i^* \big) \leq&  \big(\hat{\bm{w}}_i^{k+1} - \hat{\bm{w}}_i^*\big)^Th'\big(\hat{\bm{w}}_i^{k+1}\big)-  \frac{m}{2} \big\|\hat{\bm{w}}_i^{k+1}-\hat{\bm{w}}_i^* \big\|^2.
	\end{aligned}
	\end{equation}
	Meanwhile the KKT conditions are
	\begin{equation}\label{eq:8} 
	\bm{A}\check{\bm{W}}^* = 0, ~\text{and}~A^T_t\lambda^*=\nabla_{\check{\bm{w}}_i}F_i\big(\bm{z}_i^* \big),~\bm{0}=\nabla_{\hat{\bm{w}}_i}F_i\big(\bm{z}_i^* \big).
	\end{equation}
	Furthermore, considering the convexity of $F_i\big(\bm{z}_i\big)$ as (\ref{eq9}), we have
	\begin{equation}\label{eq:9}
	\begin{aligned}
	F_i \big(\bm{z}_i^{k+1} \big) - F_i \big( \bm{z}_i^* \big)\geq \big(\bm{z}_i^{k+1} -\bm{z}_i^* \big)^T \nabla F_i\big(\bm{z}_i^*\big) 
	= \big(\check{\bm{w}}_i^{k+1}-\check{\bm{w}}_i^* \big)^TA_i^T\lambda^*.
	\end{aligned}	
	\end{equation}
	Combining (\ref{eq:6}), (\ref{eq:7}), (\ref{eq:8}) and (\ref{eq:9}), we can obtain
	\begin{equation}\label{eq:10}
	\begin{aligned}
	0\leq& \vphantom{\frac{1}{2}}  \big( \check{\bm{w}}_i^{k+1} - \check{\bm{w}}_i^*  \big)^T \big( \nabla_{\check{\bm{w}}_i} F_i \big(\bm{z}_i^{k+\frac{1}{2}}  \big)-A_t^T\lambda^*  \big) +  \big(\hat{\bm{w}}_i^{k+1} - \hat{\bm{w}}_i^* \big)^T\cdot\\
	&\vphantom{\frac{1}{2}}  \big( \nabla_{\hat{\bm{w}}_i}F_i \big(\bm{z}_i^{k+1}  \big) + \nabla_{\hat{\bm{w}}_i}f_i \big( \bm{z}_i^{k+\frac{1}{2}}  \big) -\nabla_{\hat{\bm{w}}_i}f_i \big(\bm{z}_i^{k+1}  \big) \big)+\\
	&\vphantom{\frac{1}{2}} \frac{C_i}{2} \big\|\hat{\bm{w}}_i^k- \hat{\bm{w}}_i^{k+1}  \big\|^2 -\frac{m}{2} \big\|\hat{\bm{w}}_i^{k+1}-\hat{\bm{w}}_i^*  \big\|^2\\
	\leq &\vphantom{\frac{1}{2}}  \big( \check{\bm{w}}_i^{k+1} - \check{\bm{w}}_i^*  \big)^T \big(   \nabla_{\check{\bm{w}}_i} F_i \big(\bm{z}_i^{k+\frac{1}{2}}  \big)-A_t^T\lambda^*  \big) +  \big(\hat{\bm{w}}_i^{k+1} - \hat{\bm{w}}_i^* \big)^T\cdot\\
	&\vphantom{\frac{1}{2}}  \nabla_{\hat{\bm{w}}_i}F_i \big(\bm{z}_i^{k+1}  \big)+C_i \big\| \hat{\bm{w}}_i^{k+1}-\hat{\bm{w}}_i^*  \big\|  \big\|\hat{\bm{w}}_i^{k} -\hat{\bm{w}}_i^{k+1} \big\|+\\
	& \frac{C_i}{2} \big\|\hat{\bm{w}}_i^k- \hat{\bm{w}}_i^{k+1}  \big\|^2 -\frac{m}{2} \big\|\hat{\bm{w}}_i^{k+1}-\hat{\bm{w}}_i^*  \big\|^2\\
	\overset{(a)}{\leq} & \vphantom{\frac{1}{2}} \big( \check{\bm{w}}_i^{k+1} - \check{\bm{w}}_i^*  \big)^T \big(   \nabla_{\check{\bm{w}}_i} F_i \big(\bm{z}_i^{k+\frac{1}{2}}  \big)-A_i^T\lambda^*  \big) + \big(\hat{\bm{w}}_i^{k+1} - \hat{\bm{w}}_i^* \big)^T\cdot\\
	& \nabla_{\hat{\bm{w}}_i}F_i \big(\bm{z}_i^{k+1} \big)+\frac{C_i}{2m} \big(C_i+m \big)  \big\|\hat{\bm{w}}_i^{k} -\hat{\bm{w}}_i^{k+1} \big\|^2,				
	\end{aligned}
	\end{equation}
	where $(a)$ is the inequality $C_i\| \hat{\bm{w}}_i^{k+1}-\hat{\bm{w}}_i^* \|  \|\hat{\bm{w}}_i^{k} -\hat{\bm{w}}_i^{k+1} \|\leq \frac{m}{2}\|\hat{\bm{w}}_i^{k+1}-\hat{\bm{w}}_i^* \|^2 + \frac{C_i^2}{2m}\|\hat{\bm{w}}_i^k-\hat{\bm{w}}_i^{k+1} \| ^2$. Recall the $k$-th iteration in Algorithm 1, agent $i$ first solves (\ref{eq3}), with the optimality conditions as
	\begin{equation}\label{eq:12}
	\begin{aligned}
	&\vphantom{\frac{1}{2}}  A_i^T \big[  \lambda^k -\rho \big( A_i\check{\bm{w}}_i^{k+1} + \sum\nolimits_{j\neq i}A_j\check{\bm{w}}_j^k  \big)  \big]+P_i \big(\check{\bm{w}}_i^k-\check{\bm{w}}_i^{k+1}  \big)\\
	=&\vphantom{\frac{1}{2}}A_i^T \big[  \lambda^k + \rho A_i \big(\check{\bm{w}}_i^k-\check{\bm{w}}_i^{k+1} \big) -\rho \bm{A} \big(\check{\bm{W}}^k - \check{\bm{W}}^{k+1} \big) -\big.\\		
	&\vphantom{\frac{1}{2}}\big.\rho \bm{A} \big(\check{\bm{W}}^{k+1}-\check{\bm{W}}^* \big) \big] + P_i \big(\check{\bm{w}}_i^k-\check{\bm{w}}_i^{k+1}  \big)= \nabla_{\check{\bm{w}}_i}F_i \big(\bm{z}_i^{k+\frac{1}{2}} \big).
	\end{aligned}		
	\end{equation}
	Then the agent $i$ updates $\hat{\bm{w}}_i$ by solving the subproblem (\ref{eq5}). The optimality condition gives
	\begin{equation}\label{eq:14}
	Q_i \big( \hat{\bm{w}}_i^k-\hat{\bm{w}}_i^{k+1} \big)=\nabla_{\hat{\bm{w}}_i}F_i\big(\bm{z}_i^{k+1} \big)  .
	\end{equation}
	Then by plugging (\ref{eq:12}) and (\ref{eq:14}) into (\ref{eq:10}), we have following inequality
	\begin{equation}\label{eq:15}
	\begin{aligned}
	& \vphantom{\frac{1}{2}}  \big(\check{\bm{w}}_i^{k+1} - \check{\bm{w}}_i^*  \big)^TA_i^T\big(\lambda^k-\lambda^*  \big) +  \big(\check{\bm{w}}_i^{k+1} - \check{\bm{w}}_i^* \big)^T \big(\rho A_i^TA_i + P_i \big) \cdot\\
	& \vphantom{\frac{1}{2}}\big( \check{\bm{w}}_i^k-\check{\bm{w}}_i^{k+1} \big) + \big(\hat{\bm{w}}_i^{k+1} - \hat{\bm{w}}_i^* \big)^T Q_i \big( \hat{\bm{w}}_i^k-\hat{\bm{w}}_i^{k+1} \big)\\
	\geq &\rho  \big(\check{\bm{w}}_i^{k+1} - \check{\bm{w}}_i^*\big)^T A_i^T \bm{A} \big(\check{\bm{W}}^k -\check{\bm{W}}^{k+1} \big)    - \frac{C_i}{2m} \big(C_i+m \big)\cdot\\  
	&\vphantom{\frac{1}{2}}\big\|\hat{\bm{w}}_i^k- \hat{\bm{w}}_i^{k+1} \big\|^2+ \rho   \big(\check{\bm{w}}_i^{k+1} - \check{\bm{w}}_i^*\big)^TA_i^T\bm{A} \big(\check{\bm{W}}^{k+1} -\check{\bm{W}}^* \big). 
	\end{aligned}
	\end{equation}
	Since $ \lambda^{k+1} = \lambda^k -\gamma \rho \bm{A}\check{\bm{W}}^{k+1}$, it can be derived that 
	\begin{equation}
	\bm{A} \big(\check{\bm{W}}^{k+1} -\check{\bm{W}}^* \big) = \frac{1}{\gamma \rho }  \big(\lambda^k-\lambda^{k+1}  \big).
	\end{equation}
	Noting the truth that $\lambda^k-\lambda^* = \lambda^k-\lambda^{k+1} + \lambda^{k+1} - \lambda^*$, and summing the inequality (\ref{eq:15}) over all $i\in\mathcal{V}$, we obtain
	\begin{equation}
	\begin{aligned}
	&\vphantom{\frac{1}{2}}\frac{1}{\gamma \rho } \big( \lambda^k- \lambda^{k+1}\big)^T\big( \lambda^k-\lambda^* \big) + \sum\nolimits_{i} \big[  \big(\check{\bm{w}}_i^{k+1} - \check{\bm{w}}_i^* \big)^T \big(\rho A_i^TA_i + P_i \big)\cdot \\
	&\vphantom{\frac{1}{2}} \big( \check{\bm{w}}_i^k-\check{\bm{w}}_i^{k+1} \big) + \big(\hat{\bm{w}}_i^{k+1} - \hat{\bm{w}}_i^*  \big)^T  Q_i \big( \hat{\bm{w}}_i^k-\hat{\bm{w}}_i^{k+1}  \big) \big]\\
	\geq &\vphantom{\frac{1}{2}}\frac{1-\gamma}{\gamma^2\rho}  \big\|\lambda^k - \lambda^{k+1}  \big\|^2 + \frac{1}{\gamma } \big(  \lambda^k-\lambda^{k+1}\big)^T \bm{A} \big( \check{\bm{W}}^k-\check{\bm{W}}^{k+1} \big) -\\
	&\vphantom{\frac{1}{2}}\sum\nolimits_{i} \frac{C_i}{2m} \big(C_i+m \big) \big\| \hat{\bm{w}}_i^k - \hat{\bm{w}}_i^{k+1}  \big\|^2,
	\end{aligned}
	\end{equation}
	or more compactly,
	\begin{equation}
	\begin{aligned}
	&\vphantom{\frac{1}{2}}\big(\bm{u}^k-\bm{u} ^{k+1} \big)^T\bm{G} \big(\bm{u}^{k+1}-\bm{u}^*\big) 
	\geq \vphantom{\frac{1}{2}} \frac{1-\gamma}{\gamma^2\rho} \big\|\lambda^k - \lambda^{k+1} \big\|^2 +\\& \frac{1}{\gamma } \big(  \lambda^k-\lambda^{k+1}\big)^T  \bm{A} \big(\check{\bm{W}}^k-\check{\bm{W}}^{k+1} \big)- \vphantom{\frac{1}{2}}\sum\nolimits_{i}\frac{C_i}{2m} \big(C_i+m \big) \big\| \hat{\bm{w}}_i^k - \hat{\bm{w}}_i^{k+1}  \big\|^2.
	\end{aligned}
	\end{equation}
	With the equality $\| \bm{u}^k - \bm{u}^* \|^2_{\bm{G}} - \| \bm{u}^{k+1} -\bm{u}^*\|^2_{\bm{G}} = 2 (\bm{u}^k-\bm{u} ^{k+1}  )^T{\bm{G}} (\bm{u}^{k+1}-\bm{u} ^* ) + \|\bm{u}^k-\bm{u}^{k+1} \|^2_{\bm{G}}$, we can get the desired result as 
	\begin{equation}
	\begin{aligned} 	
	& \vphantom{\frac{1}{2}}\big\| \bm{u}^k - \bm{u}^* \big\|^2_{\bm{G}} - \big\| \bm{u}^{k+1} -\bm{u}^*\big\|^2_{\bm{G}} 
	\geq \frac{2-\gamma}{\gamma^2\rho }  \big\|\lambda^k-\lambda^{k+1}  \big\|^2 +\\& \vphantom{\frac{1}{2}}\frac{2}{\gamma} \big(  \lambda^k-\lambda^{k+1}\big)^T  \bm{A} \big(\check{\bm{W}}^k-\check{\bm{W}}^{k+1} \big) +  
	\big\|\check{\bm{W}}^k-\check{\bm{W}}^{k+1}  \big\|^2_{\bm{G}_1} +\\  &\big\|\hat{\bm{W}}^k-\hat{\bm{W}}^{k+1}  \big\|^2_{\bm{G}_2-\bm{G}_3}
	\vphantom{\frac{1}{2}}=\big\|\bm{u}^k-\bm{u}^{k+1}  \big\|^2_{\bm{M}}.
	\end{aligned} 
	\end{equation}
	
	\section{proof of Theorem 1}
	To prove the convergence of Alg.1, we should ensure that $\|\bm{u} \|_{\bm{M}}^2\geq0$ holds for any $\bm{u}$, which is equivalent to guaranteeing $\bm{M}$ to be semi-positive. We have
	\begin{equation}\label{eq:17}
	\big\|\bm{u} \big\|_{\bm{M}}^2 = \big\|\check{\bm{W}}  \big\|_{\bm{G}_1}^2 +  \big\| \hat{\bm{W}}  \big\|^2_{\bm{G}_2-\bm{G}_3} + \frac{2-\gamma}{\gamma^2\rho}\big\|\lambda \big\|^2+ \frac{2}{\gamma} \lambda^T\bm{A}\check{\bm{W}}.	
	\end{equation}
	Moreover, we can derive the following inequality  
	\begin{equation}\label{eq:18}
	\begin{aligned}
	\frac{2}{\gamma} \lambda^T\bm{A}\check{\bm{W}}= \sum\nolimits_{i}\lambda^TA_i\check{\bm{w}}_i 
	\geq  -\sum\nolimits_{i} \bigg( \frac{\epsilon_i}{\gamma^2\rho}\big\|\lambda\big\|^2 + \frac{\rho}{\epsilon_i}  \big\| A_i\check{\bm{w}}_i \big\|^2  \bigg)^2,
	\end{aligned}	
	\end{equation}
	where $\epsilon_i>0$. Hence putting (\ref{eq:18}) into (\ref{eq:17}), we can obtain
	\begin{equation}
	\begin{aligned}
	\big\|\bm{u} \big\|^2_M \geq& \vphantom{\frac{1}{2}}\sum\nolimits_{i} \bigg( \big\| \check{\bm{w}}_i \big\|_{P_i+\rho (1 -\frac{1}{\epsilon_i})A_i^TA_i}^2+ \big\|\hat{\bm{w}}_i \big\|_{Q_i-\frac{C_i}{m}(C_i+m)I}^2 \bigg)+\\
	& \vphantom{\frac{1}{2}} \frac{2-\gamma -\sum\nolimits_{i}\epsilon_i}{\gamma^2\rho}\big\| \lambda\big\|^2.
	\end{aligned}
	\end{equation}
	Since $A_i^TA_i=d_iI$, guaranteeing that $P_i+\rho (1 -\frac{1}{\epsilon_i})d_iI\succ 0$, $Q_i-\frac{C_i}{m}(C_i+m )I\succ 0$ and $2-\gamma -\sum\nolimits_{i}\epsilon_i>0$, makes $\bm{M}$ positive definite and $ \| \bm{u} \|^2_M\geq0$. Hence the sequence $\bm{u}^k$ converges to $\bm{u}^*$ as $k\to\infty$ according to \cite{Deng2017}. Since $P_i=\tau_iI$ and $Q_i=\zeta_iI$, we get the final results as (\ref{eq19}).
	
	\section{Proof of Corollary 1} 
	
	Combining (\ref{eq:6}), (\ref{eq:7}), (\ref{eq:8}), (\ref{eq:12}), (\ref{eq:14}) and summing over all $i$, we can get
	\begin{equation}\label{eq:19}
	\begin{aligned}
	&\vphantom{\frac{1}{2}}\bm{F} \big(\bm{Z}^{k+1} \big) -  \bm{F}\big(\bm{Z}^* \big)  \leq  \sum\nolimits_{i} \bigg[   \big( \check{\bm{w}}_i^{k+1} - \check{\bm{w}}_i^* \big)^T  \nabla_{\check{\bm{w}}_i} F_i \big(\bm{z}_i^{k+\frac{1}{2}} \big) +   \\&\vphantom{\frac{1}{2}}   \big(\hat{\bm{w}}_i^{k+1} - \hat{\bm{w}}_i^* \big)^T \nabla_{\hat{\bm{w}}_i}F_i \big(\bm{z}_i^{k+1}  \big)+\frac{C_i}{2m} \big(C_i+m \big)  \big\|\hat{\bm{w}}_i^{k} - \hat{\bm{w}}_i^{k+1}  \big\|^2 \bigg]\\
	=& \vphantom{\frac{1}{2}} \big( \check{\bm{W}}^{k+1}-\check{\bm{W}}^* \big)^T \big(\bm{G}_1-\rho \bm{A}^T\bm{A} \big) \big(\check{\bm{W}}^k-\check{\bm{W}}^{k+1} \big)+ \\
	&\big(\hat{\bm{W}}^{k+1}-\hat{\bm{W}}^* \big)^T\bm{G}_2 \big(\hat{\bm{W}} ^k-\hat{\bm{W}}^{k+1} \big)+\frac{1}{2} \big\|\hat{\bm{W}}^k-\hat{\bm{W}}^{k+1}  \big\|_{\bm{G}_3}^2 +\\
	& \vphantom{\frac{1}{2}}\frac{1}{\gamma \rho} \big( \lambda^k-\lambda^{k+1}\big)^T\lambda^k   -\frac{1}{\gamma^2 \rho} \big\|\lambda^k-\lambda^{k+1} \big\|^2.
	\end{aligned}
	\end{equation}
	With the condition $ \bm{G}_1^{\dagger}=\bm{G}_1-\rho \bm{A}^T\bm{A} \succ 0$, we have
	\begin{equation}\label{eq:20}
	\begin{aligned}
	\big\|\check{\bm{W}}^k-\check{\bm{W}}^{k+1} \big\|_{ \bm{G }_1^{\dagger}}^2\geq 0.
	\end{aligned}
	\end{equation}
	While with $\bm{G}_2\succ \bm{G}_3$, we can get
	\begin{equation}\label{eq:21}
	\begin{aligned}
	\big\|\hat{\bm{W}}^k-\hat{\bm{W}}^{k+1}  \big\|_{\bm{G}_2}^2 \geq  \big\|\hat{\bm{W}}^k-\hat{\bm{W}}^{k+1} \big\|_{\bm{G}_3}^2.
	\end{aligned}
	\end{equation}
	Moreover, for the last two terms in (\ref{eq:19}), we can derive
	\begin{equation}\label{eq:22}
	\begin{aligned}
	&\frac{1}{\gamma \rho} \big(  \lambda^k-\lambda^{k+1}\big)^T\lambda^k  -\frac{1}{\gamma^2\rho}  \big\|\lambda^k-\lambda^{k+1} \big\|^2\\
	= &\frac{1}{2\gamma \rho} \big\|\lambda^k-\lambda^{k+1} \big\|^2 +\frac{1}{2\gamma \rho} \big(   \lambda^k-\lambda^{k+1}\big)^T\big( \lambda^{k }+\lambda^{k+1}  \big)-\\ & \frac{1}{\gamma^2\rho} \big\|\lambda^k-\lambda^{k+1} \big\|^2 
\overset{(a)}{\leq}  \frac{1}{2\gamma \rho} \big( \big\|\lambda ^k \big\|^2 -  \big\|\lambda^{k+1} \big\|^2  \big), 
	\end{aligned}
	\end{equation}
	where $(a)$ is because $0<\gamma<2$. The following result can be obtained by integrating (\ref{eq:19}), (\ref{eq:20}), (\ref{eq:21}) and (\ref{eq:22}) as
	\begin{equation}
	\begin{aligned}
	& \vphantom{\frac{1}{2}} \bm{F} \big(\bm{Z}^{k+1} \big) -  \bm{F}\big(\bm{Z}^* \big) \leq \big(\bm{Z}^{k}-\bm{Z}^{k+1} \big)^T\bm{G}_{12}^{\dagger}  \big(\bm{Z}^{k+1}-\bm{Z}^* \big) +  \\
	& \frac{1}{2}  \big\|\bm{Z}^{k}-\bm{Z}^{k+1} \big\|_{\bm{G}_{12}^{\dagger} }^2 +\frac{1}{2\gamma \rho} \big( \big\|\lambda ^{k} \big\|^2 - \big\|\lambda^{k+1}  \big\|^2   \big) \\
	=&\frac{1}{2} \bigg[ \big\|\bm{Z}^{k}-\bm{Z}^*  \big\|^2_{\bm{G}_{12}^{\dagger} } -  \big\| \bm{Z}^{k+1}-\bm{Z}^*  \big\|^2_{\bm{G}_{12}^{\dagger} }  +  \frac{1}{ \gamma \rho} \big( \big\|\lambda ^{k}  \big\|^2 -  \big\|\lambda^{k+1} \big\|^2 \big)  \bigg],
	\end{aligned}
	\end{equation}
	where $\bm{G}_{12}^{\dagger}:=\text{blkdiag}(\bm{G}_1^{\dagger},\bm{G}_2)$. Summing the above inequality over $i=1,...,k$, we get 
	\begin{equation}
	\begin{aligned}
	&  \bm{F} \big(\overline{\bm{Z}}^{k} \big) - \bm{F} \big(\bm{Z}^* \big) 
	\overset{(a)}{\leq} \frac{1}{k} \sum\nolimits_{j=1}^{k} \bm{F}\big(\bm{Z}^{j } \big) - \bm{F} \big(\bm{Z}^* \big)\\
	&~~~~ \leq \frac{1}{2k} \bigg(  \big\| \check{\bm{W}}^0-\check{\bm{W}}^*  \big\|^2_{\bm{G}_1^{\dagger}} + \big\|\hat{\bm{W}}^0-\hat{\bm{W}}^*  \big\|^2_{\bm{G}_2} +\frac{1}{\gamma \rho} \|\lambda^0  \|^2  \bigg),
	\end{aligned}
	\end{equation}
	where $(a)$ is from the convexity of objective $\bm{F} $. Letting $\lambda^0=\bm{0}$ completes the proof.
	
	\section{Proof of Proposition 2} 
	
	Considering the setting up for $A_i$, it can be deduced that $A_i^TA_j=-I_{n\times n}$ if $(i,j)\in\mathcal{E}$ otherwise $A_i^TA_j=\bm{0}$. Hence we can conclude $\|A_i^TA_j \| \leq \sqrt{n}$.
	Since $\bm{G}_1^{\dagger} \succ 0$ and $\bm{G}_2\succ \bm{G}_3$ are required from the proof of Corollary 1, for any $\check{\bm{W}}$ we need guarantee that
	\begin{equation}\label{eq59}
	\begin{aligned}
	\vphantom{\frac{1}{2}}\big\|\check{\bm{W}}  \big\|^2_{\bm{G}_1^{\dagger}}
	= \vphantom{\frac{1}{2}}&\sum\nolimits_{i} \big( \check{\bm{w}}_i^TP_i\check{\bm{w}}_i  - \rho \sum\nolimits_{j}  \check{\bm{w}}_i^TA_i^TA_j\check{\bm{w}}_j\big)  \\
	= \vphantom{\frac{1}{2}}&\sum\nolimits_{i} \big[\check{\bm{w}}_i^T  \big(P_i-\rho A_i^TA_i \big) \check{\bm{w}}_i - \rho \sum\nolimits_{j\neq i} \check{\bm{w}}_i^TA_i^TA_j\check{\bm{w}}_j \big] \\
	\overset{(a)}{\geq}  \vphantom{\frac{1}{2}}& \sum\nolimits_{i}  \big[\check{\bm{w}}_i^T \big(P_i-\rho A_i^TA_i \big) \check{\bm{w}}_i   - \rho \sqrt{n} \sum\nolimits_{j\neq i} \big\| \check{\bm{w}}_i \big\| \big\|\check{\bm{w}}_j \big\|\big] \\
	\geq \vphantom{\frac{1}{2}}& \sum\nolimits_{i}   \big\| \check{\bm{w}}_i \big\|^2_{P_i-\rho A_i^TA_i-4(N-1)\rho\sqrt{n} I}\geq 0,
	\end{aligned}
	\end{equation}
	where $(a)$ is because $  \|A_i^TA_j  \| \leq \sqrt{n}(i\neq j)$. The inequality (\ref{eq59}) can be satisfied if $P_i-\rho A_i^TA_i-4(N-1)\rho \sqrt{n} I\succ 0$. With $P_i = \tau_iI$ and $A_i^TA_i = d_iI$, it reduces to $\tau_i>\rho d_i + 4(N-1)\rho\sqrt{n}$. The condition for $Q_i$ required in Corollary 1 are consistent with that in Theorem 1. Combining the results in Remark 1 completes the proof.

	\section{Proof of Lemma 3} 
	From Assumption 3 we know $\tilde{f}_i^j$ is jointly convex over $\check{\bm{w}}_i$ and $\hat{\bm{w}}_i$. Thus with the convexity of term $\|\bm{w}_{j,i}^{\text{loc}}-(\check{\bm{w}}_i+\hat{\bm{w}}_i ) \|^2$ in (\ref{eq35}), we can conclude that $\mathbbm{f}_i$ is differentiable and convex. The gradient of $\nabla \tilde{f}_i$ satisfies
	\begin{equation}
	\big\| \nabla \tilde{f}_i\big(\bm{z}_i^1 \big)- \nabla \tilde{f}_i\big(\bm{z}_i^2  \big) \big\|\leq \tilde{C}_i\big\|   \bm{z}_i^1-\bm{z}_i^2  \big\|,~\bm{z}_i^1,\bm{z}_i^2\in\mathbbm{R}^n\times \mathbbm{R}^n. 
	\end{equation}
	Considering the partial gradient of $\mathbbm{f}_i$ over $\check{\bm{w}}_i$, we can derive the following inequality.
	\begin{equation}
	\begin{aligned}
	&\vphantom{\frac{1}{2}}\big\| \partial \mathbbm{f}_i\big(\bm{z}_i^1 \big)- \partial \mathbbm{f}_i\big(\bm{z}_i^2  \big) \big\|^2=\big\| \partial \tilde{f}_i\big(\bm{z}_i^1\big)- \partial \tilde{f}_i\big(\bm{z}_i^2\big)+\mu_3\sum\nolimits_{j\in\mathcal{V}_i}c_{j,i}\cdot\\
	&\vphantom{\frac{1}{2}} \big(\check{\bm{w}}_i^1-\check{\bm{w}}_i^2 + \hat{\bm{w}}_i^1-\hat{\bm{w}}_i^2 \big)  \big\|^2\leq 2\big\| \partial \tilde{f}_i\big(\bm{z}_i^1\big)- \partial \tilde{f}_i\big(\bm{z}_i^2\big)\big\|^2+ 2\mu_3^2\cdot\\
	&\vphantom{\frac{1}{2}}\left(\sum\nolimits_{j\in\mathcal{V}_i }c_{j,i}\right)^2\big\| \check{\bm{w}}_i^1-\check{\bm{w}}_i^2 + \hat{\bm{w}}_i^1-\hat{\bm{w}}_i^2 \big\|^2.\label{eq62}
	\end{aligned}
	\end{equation}
	Since $\| \check{\bm{w}}_i^1-\check{\bm{w}}_i^2 + \hat{\bm{w}}_i^1-\hat{\bm{w}}_i^2 \|^2\leq 2(\|\check{\bm{w}}_i^1-\check{\bm{w}}_i^2 \|^2+\|\hat{\bm{w}}_i^1-\hat{\bm{w}}_i^2 \|^2)=2\|\bm{z}_i^1-\bm{z}_i^2 \|^2$, summing the inequality (\ref{eq62}) over partial gradient over $\check{\bm{w}}_i$ and $\hat{\bm{w}}_i$, we obtain 
	\begin{equation}
	\begin{aligned}
	&\vphantom{\frac{1}{2}}\big\| \nabla \mathbbm{f}_i\big(\bm{z}_i^1 \big)- \nabla \mathbbm{f}_i\big(\bm{z}_i^2  \big) \big\|^2\leq 2\big\|  \nabla \tilde{f}_i\big(\bm{z}_i^1 \big)- \nabla \tilde{f}_i\big(\bm{z}_i^2  \big)\big\|^2 + 4\mu_3^2\cdot\\
	&\vphantom{\frac{1}{2}}\left(\sum\nolimits_{j\in\mathcal{V}_i }c_{j,i}\right)^2\big\|\bm{z}_i^1-\bm{z}_i^2 \big\|^2\\
	\leq &\bigg[ 2\tilde{C}_i^2 + 4\mu_3^2\left(\sum\nolimits_{j\in\mathcal{V}_i }c_{j,i}\right)^2 \bigg]\big\|\bm{z}_i^1-\bm{z}_i^2 \big\|^2.
	\end{aligned}
	\end{equation}
	We let $\mathbbm{C}_i=\sqrt{2\tilde{C}_i^2 + 4\mu_3^2 (\sum\nolimits_{j\in\mathcal{V}_i }c_{j,i} )^2}$ to make (\ref{eq7}) satisfied for $\mathbbm{f}_i$. Then following the proof of Lemma 1, it can be shown that (\ref{eq8}) also holds for $\mathbbm{f}_i$ but replacing $C_i$ with $\mathbbm{C}_i$.
\end{appendices}

 \bibliography{ref1}
 \bibliographystyle{IEEEtran}

\end{document}